\documentclass[11pt, onecolumn]{article}
\usepackage[top=1in, bottom=1in, left=1.25in, right=1.25in]{geometry}

\usepackage{natbib}
\bibliographystyle{abbrvnat}
\setcitestyle{authoryear,open={(},close={)}}

\usepackage{bm}
\usepackage[cmex10]{amsmath}

\usepackage{graphicx}
\usepackage{amsfonts}
\usepackage[cmex10]{amsmath}
\usepackage{amssymb}
\usepackage{amsthm}
\usepackage{mdwmath}
\usepackage{algorithm}
\usepackage{algorithmic}
\usepackage{subfigure}
\usepackage{subeqnarray}
\usepackage{cases}
\usepackage{tabularx}
\usepackage{multirow}
\usepackage{enumerate}
\usepackage{subfigure}
\usepackage{url}
\usepackage{flushend}
\usepackage{tikz}
\usepackage{array}
\usepackage{booktabs}
\usepackage{color,soul}

%


\hyphenation{addi-tional Diffe-rent}

\theoremstyle{definition}

\theoremstyle{definition}

\theoremstyle{definition}
\newtheorem{lemma}{Lemma}

\theoremstyle{definition}
\newtheorem{theorem}{Theorem}

\theoremstyle{definition}

\theoremstyle{remark}

\newcommand{\va}{{\bm{a}}}
\newcommand{\vb}{{\bm{b}}}

\newcommand{\ve}{{\bm{e}}}

\newcommand{\vu}{{\bm{u}}}
\newcommand{\vv}{{\bm{v}}}
\newcommand{\vw}{{\bm{w}}}
\newcommand{\vx}{{\bm{x}}}
\newcommand{\vy}{{\bm{y}}}
\newcommand{\vz}{{\bm{z}}}

\newcommand{\vA}{{\bm{A}}}

\newcommand{\vE}{{\bm{E}}}

\newcommand{\vI}{{\bm{I}}}

\newcommand{\vM}{{\bm{M}}}

\newcommand{\vP}{{\bm{P}}}

\newcommand{\vR}{{\bm{R}}}

\newcommand{\vU}{{\bm{U}}}

\newcommand{\vW}{{\bm{W}}}
\newcommand{\vX}{{\bm{X}}}

\graphicspath{{../}}

\newcommand{\me}{\mathrm{e}}
\newcommand{\Bl}{{\Big |}}
\newcommand{\pr}{{\mathbb{P}}}
\newcommand{\ex}{{\mathbb{E}}}



\begin{document}

\title{Theory of Spectral Method for Union of Subspaces-Based Random Geometry Graph}

\author{Gen~Li and~Yuantao~Gu%
\thanks{ 
The authors are with Department of Electronic Engineering, Tsinghua University, Beijing 100084, China. 
The corresponding author of this paper is Y. Gu (gyt@tsinghua.edu.cn).}
}
\date{Manuscript submitted July 23, 2019.}

\maketitle

\begin{abstract}
Spectral Method is a commonly used scheme to cluster data points lying close to Union of Subspaces by first constructing a Random Geometry Graph, called Subspace Clustering. 
This paper establishes a theory to analyze this method.
Based on this theory, we demonstrate the efficiency of Subspace Clustering in fairly broad conditions.
The insights and analysis techniques developed in this paper might also have implications for other random graph problems.
Numerical experiments demonstrate the effectiveness of our theoretical study.

{\bf Keywords:} 
Spectral Method, Union of Subspaces, Subspace Clustering, Random Graph, Random Geometry Graph
\end{abstract}

\section{Introduction}

\subsection{Motivation}

Union of Subspaces (UoS) model serves as an important model in statistical machine learning. 
Briefly speaking, UoS models those high-dimensional data, encountered in many real-world problems, 
which lie close to low-dimensional subspaces corresponding to several classes to which the data belong, 
such as hand-written digits~\citep{hastie1998metrics}, face images~\citep{basri2003lambertian},  DNA microarray data~\citep{parvaresh2008recovering}, and hyper-spectral images~\citep{chen2011hyperspectral}, to name just a few. 
A fundamental task in processing data points in UoS is to cluster these data points, which is known as Subspace Clustering (SC). 
Applications of SC has spanned all over science and engineering, 
including motion segmentation~\citep{costeira1998multibody, kanatani2001motion}, face recognition~\citep{wright2008robust}, and classification of diseases~\citep{mcwilliams2014subspace} and so on.
We refer the reader to the tutorial paper~\citep{vidal2011subspace} for a review of the development of SC.

Considering the wide applications of SC, 
numerous algorithms have been developed for subspace clustering~\citep{tipping1999mixtures, tseng2000nearest, vidal2005generalized, yan2006general, elhamifar2009sparse, peng2018structured, meng2018general}.
Arguably, a series of two-step algorithms, referring to Sparse Subspace Clustering (SSC) and its variants~\citep{elhamifar2009sparse, liu2012robust, Dyer2013Greedy, Heckel2015robust, chen2017active}, are the most popular and efficient methods for solving SC,
which first construct a random graph (or an adjacent matrix equivalently), named as Union of Subspaces-based Random Geometry Graph (UoS-RGG), depending on the relative position among data points, 
and then apply the spectral method~\citep{ng2002spectral, von2007tutorial} to obtain the clustering result.

In spite of all these algorithms that practically work well for many applications, 
theoretical guarantees are lacked for the accuracy of clustering of any SC algorithm.
We note that although novel and often efficient subspace clustering techniques emerge all the time, 
establishing rigorous theory for such techniques is quite difficult and does not exist as of now. 
The fundamental difficulty in the analysis of SC algorithms 
may be the change of view required in treating UoS-RGG (or general Random Geometry Graph, RGG), 
which has non-independent edges, 
in contrast with the traditional approach to analyzing clustering algorithms 
via Stochastic Block Model (SBM) which assumes independent edges. 
Section~\ref{sec:related} offers a detailed discussion of this difficulty, 
as well as a survey of the existing attempts in theoretical aspects. 
We therefore propose the critical question that this paper aims to explore: 
\begin{itemize}
\item Why does SC work, or more precisely, why does spectral method work for RGG or UoS-RGG?
\end{itemize}

This paper focuses on the analysis on the spectral method for UoS-RGG.
We consider a naive and prototypical SC algorithm (Algorithm~\ref{alg:TIP-SC}) here,
and prove this algorithm, though oversimplified, can still deliver an almost correct clustering result
even when the subspaces are quite close to each other 
and when the number of samples is far less than the subspace dimension (see Theorem~\ref{thm:result}). 
To the best of our knowledge, this is the first ever theory established to analyze the clustering error of 
SC algorithm.
It not only constitutes the first theoretical guarantee for accuracy of subspace clustering, 
but also provides the interesting insight that the widely-conjectured oversampling requirement 
for subspace clustering is redundant, and that subspace clustering is quite robust 
in existence of closely aligned subspaces. 
We also verify our results by numerical experiments in Section~\ref{sec:exp}.
Although our theoretical results is proved only for the simplified algorithm we choose, 
it should be quite convincing that more carefully-designed SC algorithms would give even better performance than what we guarantee here, 
and our proof could serve as a prototype to the analysis of these algorithms. 

\subsection{Related Works and Challenges}
\label{sec:related}

We now briefly review the literature on the adjacent matrix and spectral method 
and discuss their shortcomings.
Since this paper mainly deals with theory, we shall focus on theoretical aspects of existing results. 

\subsubsection{Analysis of Random Graphs for UoS}
To analyze the random graphs associated to UoS model in an abstract setting 
without referring to any specific algorithms, 
most researches focus on the Subspace Detection Property~\citep[SDP,][]{soltanolkotabi2012geometric, liu2012robust, soltanolkotabi2014robust}, 
a property which indicates that there are no edge connections between the data points in different subspaces,
but are many connections between the data points in the same subspace.
Under some technical conditions on the parameters of SC,
the random graphs constructed by a variety of SC algorithms have been proved to enjoy SDP.
Readers may consult Section 3 in \citet{soltanolkotabi2014robust} for details.

There are, however, two main deficiencies of SDP which render SDP hard to use in further analysis.
The first one is that SDP does not imply a correct clustering result.
Actually, one can easily construct a counter-example where SDP holds but the clustering result is unsatisfying.
The second one is that SDP requires too restrictive conditions on affinity between subspaces and sampling rate to hold. 
These conditions are provably unnecessary, as will be demonstrated in Section~\ref{sec:result} of this paper.

\subsubsection{Analysis of Spectral Method for Random Graphs}
Compared with SDP, a more concrete approach to analyze SC algorithm is 
to investigate the performance of spectral method on random graphs associated to UoS model. 
To this end, analysis of spectral method 
for general random graphs (not necessarily associated to UoS model) is relevant. 
Note that the spectral method is explored deeply in the literature of community detection,
which is an important problem in statistics, computer vision, and image processing~\citep{abbe2017community}. 
Stochastic Block Model (SBM) is a widely used theoretical model in this field, 
which we briefly introduce as follows.
For simplicity, we consider the two-block case, 
where the vertices of random graph are divided into two ``blocks'', 
i.e. sets of vertices that ought to be closely-related, each of size of $N/2$.
Then each edge of random graph is independently generated from the following distribution: 
for $p>q>0$, vertices $\vx_i$ and $\vx_j$ are connected with probability $p$ if $\vx_i, \vx_j$ belong to the same block, 
and with probability $q$ if they belong to different blocks.
Given an instance of this graph, we would like to identify the two blocks. 
Recently, a series of theoretical works are devoted to analyze the performance of spectral method on this problem in different settings~\citep{coja2010graph, vu2014simple, chin2015stochastic, abbe2017entrywise}, and extensions~\citep{sankararaman2018community}.

As far as we know, 
all existing results make essential use of the independence of different edges, 
which is unfortunately not the case in SC algorithms. 
In fact, it is a generic and natural phenomenon in RGG that 
when $\vx_i, \vx_j$ and $\vx_i, \vx_k$ are connected, 
the probability that $\vx_j, \vx_k$ are connected will be higher,
hence the independence assumption does not hold for RGG.

With this fundamental gap in mind, 
it is crucial to develop a theory for RGG to provide a rigorous theoretical guarantee for SC algorithms.

\section{Preliminaries and Problem Formulation}
\label{sec:notation}

The generative model for data points in UoS we adapt in this paper is the semi-random model introduced in \citet{soltanolkotabi2012geometric},
which assumes that the subspaces are fixed with points distributed uniformly at random on each subspace.
This is arguably the simplest model providing a good starting point for a theoretical investigation. 
We assume the data consists of two clusters, corresponding to two fixed subspaces\footnote{It should be noticed that the number of subspaces is by no means crucial to the analysis. 
The results in this paper can be generalized to more subspaces easily.}
 $S_1, S_2$ in $\mathbb{R}^n$,
each with $N/2$ data points uniformly sampled from the unit spheres $\mathcal{S}_1^{d-1}$ and $\mathcal{S}_2^{d-1}$ respectively in $S_1$ and $S_2$.
Here $d$ is the subspace dimension and $n$ is the ambient dimension.
The goal of SC is to cluster the normalized data points $\{\vx_i\}_{1 \le i \le N}$.

Given the general description of SC, 
we turn our attention to a simple prototypical SC algorithm detailed in Algorithm~\ref{alg:TIP-SC}, 
which we call Thresholding Inner-Product Subspace Clustering (TIP-SC). 
Considering that the angle between the data points in the same subspaces would be smaller statistically,
we construct for some threshold $\tau \in (0, 1)$ the random graph 
by computing its adjacent matrix $\vA$, 
where $A_{ij} = 1$ if $i \ne j, |\langle \vx_i, \vx_j\rangle| \ge \tau$, and $A_{ij} = 0$ otherwise. 
The TIP-SC algorithm concludes with applying the spectral clustering method on $\vA$.

The main task of this paper is to prove this simple algorithm can achieve a high clustering accuracy under fairly general condition, 
which will be done in the next section.

\begin{algorithm}[t]
\caption{Thresholding inner-product subspace clustering (TIP-SC)}
\label{alg:TIP-SC}
\begin{algorithmic}[1]
  \REQUIRE{Normalized data set $\{\vx_i\}_{1 \le i \le N}$, threshold $\tau$.}
  \STATE{\textbf{Construct Adjacent Matrix $\vA$:} }
  \STATE{\quad $A_{ij} = 1$ if $i \ne j, |\langle \vx_i, \vx_j\rangle| \ge \tau$, or $A_{ij} = 0$ otherwise.}
  \STATE{\textbf{Apply Spectral Method on $\vA$:} 
  \STATE{\quad Calculate $\vW$, the eigenspace corresponding to the top two eigenvalues of $\vA$.}
  \STATE{\quad Use ${\rm sgn}(\vw)$ as clustering result, where $\vw$ is the vector in $\vW$ perpendicular to the projection of all-ones vector in $\vW$.}} 
\end{algorithmic}
\end{algorithm}

\paragraph{Notations.} 
Let $\vU_1, \vU_2$ denote the orthonormal bases for the subspaces $S_1, S_2$, respectively,
and $\lambda_1 \ge \ldots \ge \lambda_d \ge 0$ denote the singular values of $\vU_1^{\top}\vU_2$.
We also use $S$ and $S'$ to denote the subspaces to which $\vx_i$ does and doesn't belong, respectively.
Then $\vx_i = \vU\overline{\va}_i$ 
where $\vU$ denotes the orthonormal bases for $S$, 
$\va_i \overset{\mathrm{ind.}}{\sim} \mathcal{N}(\bm{0}, \frac{1}{d}\vI_d) \in \mathbb{R}^d$, 
and $\overline{\va}_i=\va_i/\|\va_i\|$ denotes its normalization.
We use $p, q$ to represent the probability that $A_{ij} = 1$ for $j \ne i, \vx_j \in S$ and $\vx_j \in S'$, respectively.
Conditioned on $\vx_i$, let $p_i$ denote the probability of $A_{ij} = 1$ for $j \ne i, \vx_j \in S$,
and $q_i$ denote the probability of $A_{ij} = 1$ for $j, \vx_j \in S'$.
Denote
\begin{align*}
  {\rm aff} &:= \sqrt{\frac{\sum_i \lambda_i^2}{d}},\\
  \kappa &:= 1 - {\rm aff}^2,\\
  \rho &:= \frac{N}{2d}. 
\end{align*}

Let $\vu, \vv \in \mathbb{R}^{N}$ with $u_i = \frac{1}{\sqrt{N}}$, 
and $v_i = \frac{1}{\sqrt{N}}$, if $\vx_i \in S_1$, and $v_i = -\frac{1}{\sqrt{N}}$, if $\vx_i \in S_2$,
then $\vv$ is the ground truth.
$\vW$ denotes the eigenspace corresponding to the top two eigenvalues of $\vA$,
and $\vw$ denotes the vector in $\vW$, 
which is perpendicular to the projection of $\vu$ in $\vW$.


\section{Error Rate of TIP-SC Algorithm}
\label{sec:result}

This section presents our main theoretical results concerning the performance of TIP-SC.
By the perturbation analysis of $\vA$ from $\ex \vA$, 
the success of spectral method for SBM has been proved in various statistical assumptions.
However, such analysis is insufficient to establish our result,
since for UoS-RGG, the independence condition doesn't hold,
which is the crux leading to the failure of the existing methods for analyzing spectral method on random graph.
As a substitute, we discover the conditional independence property for $\vA$,
based on which we prove that the clustering result of TIP-SC is almost correct under some mild condition on affinity and sampling rate,
which is explained in the following theorem.

\begin{theorem} \label{thm:result}
Choosing $\tau = O\left(\frac{1}{\sqrt{d}}\right)$ such that $p = O(1)$, there exists some numerical constant $c > 0$, 
such that whenever $\kappa > c\sqrt[4]{\frac{\log N}{d}}$, 
the clustering error rate of TIP-SC is less than $O\left(\frac{(1+1/\rho)\log N}{\kappa^2d}\right)$ with probability at least $1 - \me^{-\Omega(\log N)}$.
\end{theorem}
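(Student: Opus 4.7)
The plan is to treat Algorithm~\ref{alg:TIP-SC} as a spectral perturbation problem around the signal matrix $\bar{\vA}:=\ex\vA$ and to convert a spectral bound on $\vA-\bar{\vA}$ into the clustering error via Davis--Kahan. I first compute $\bar{\vA}$: by the two-cluster block structure its off-diagonal entries are $p$ within a cluster and $q$ across clusters, yielding (up to a negligible diagonal correction) the rank-two decomposition
\[
\bar{\vA} \;=\; \tfrac{N(p+q)}{2}\,\vu\vu^{\top} + \tfrac{N(p-q)}{2}\,\vv\vv^{\top}.
\]
So the top-two eigenvectors of $\bar{\vA}$ are precisely $\vu$ and $\vv$, separated from the null space by an eigen-gap of order $N(p-q)$. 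A separate calculation, based on the distribution of $\langle\vx_i,\vx_j\rangle$ as the projection of a uniform unit vector in $\mathbb{R}^{d}$ onto a direction of length $1$ in the same-subspace case and length concentrated around $\mathrm{aff}$ in the cross-subspace case, produces the estimate $p-q=\Omega(\kappa)$ when $\tau=O(1/\sqrt{d})$ and $p=O(1)$; hence the gap is of order $\kappa N$.

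The technical heart of the argument is a spectral bound on $\vA-\bar{\vA}$ that does \emph{not} use edge independence. I would exploit the conditional independence noted in the paper: given $\vx_i$, the entries $\{A_{ij}\}_{j\ne i}$ of row $i$ are independent Bernoullis. The Hoeffding-type decomposition
\[
A_{ij} \;=\; \bar A_{ij} + \bigl(\ex[A_{ij}\mid\vx_i]-\bar A_{ij}\bigr) + \bigl(\ex[A_{ij}\mid\vx_j]-\bar A_{ij}\bigr) + H_{ij}
\]
gives $\vA-\bar{\vA}=\vD_1+\vD_2+\vH$. The matrices $\vD_1,\vD_2$ each have rank at most two---every row of $\vD_1$ takes only the two values $p_i-p$ or $q_i-q$ depending on the column's cluster---so $\|\vD_k\|$ may be controlled by a Frobenius estimate after I show $|p_i-p|,|q_i-q|=\tilde{O}(1/\sqrt{d})$ uniformly, which follows from Hanson--Wright applied to the quadratic form $\vx_i^{\top}\vU_2\vU_2^{\top}\vx_i$. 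This already yields $\|\vD_k\| = \tilde{O}(N/\sqrt{d})$. The residual $\vH$ is doubly centred and its rows/columns are conditionally independent given the associated vertex variable. To bound $\|\vH\|$ I would compute $(\vH\vH^{\top})_{ii'}=\sum_j H_{ij}H_{i'j}$, observe that for fixed $\vx_i,\vx_{i'}$ this is a sum of independent centred terms, and use either a covering-net plus Hanson--Wright argument or a trace-moment path count to obtain $\|\vH\|=\tilde{O}(\sqrt{N})$.

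Inserting these bounds into Davis--Kahan with eigen-gap $\Theta(\kappa N)$ gives $\min_{\pm}\|\vw\mp\vv\|^{2} = \tilde{O}\bigl(1/(\kappa^{2}d)+1/(\kappa^{2}N)\bigr)=\tilde{O}\bigl((1+1/\rho)/(\kappa^{2}d)\bigr)$, using the orthogonality of $\vu,\vv$ and the defining property that $\vw$ is the component of $\vW$ perpendicular to the projection of $\vu$. Because $v_{i}^{2}=1/N$, each index $i$ with $\mathrm{sgn}(w_{i})\neq\mathrm{sgn}(v_{i})$ contributes at least $1/N$ to $\|\vw-\vv\|^{2}$, so the clustering error rate is bounded by $\|\vw-\vv\|^{2}$, which produces the claimed $O\bigl((1+1/\rho)\log N/(\kappa^{2}d)\bigr)$ after inserting the polylog factors coming from the high-probability tail bounds used throughout. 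The main obstacle I anticipate is the spectral bound on $\vH$: because the summands $H_{ij}\ve_{i}\ve_{j}^{\top}$ share vertex randomness, matrix Bernstein is unavailable, and the pairwise conditional independence must be exploited carefully (via the $\vH\vH^{\top}$ computation or a trace method) without losing extra polynomial factors in $N$. A secondary obstacle is keeping the estimate $p-q=\Omega(\kappa)$ sharp even when $\kappa$ is as small as $(\log N/d)^{1/4}$, which requires a controlled expansion of the density of $|\langle\vx_i,\vx_j\rangle|$ near the threshold $\tau$ rather than crude moment estimates.
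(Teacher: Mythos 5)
Your overall architecture is reasonable and much of it matches ingredients the paper actually establishes: the rank-two population matrix with gap of order $(p-q)N \gtrsim \kappa N$, the estimate $p-q \gtrsim \kappa$, the fact that the row-bias matrices $\vD_1,\vD_2$ are rank at most two with $\|\vD_k\| \lesssim N\sqrt{\log N/d}$ (this is the paper's $\frac1N\sum_i(q_i-q)^2 \lesssim \frac{\log N}{d}$; note in this model all $p_i$ equal $p$, so only the cross-block bias survives), and the reduction from sign errors to $\|\vw-\vv\|^2$. The gap is the step you yourself flag as the main obstacle: the claim $\|\vH\| = \tilde O(\sqrt N)$ for the doubly centered part is asserted, not proved, and it is genuinely hard. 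Matrix Bernstein is unavailable (as you note); a covering-net argument fails because for a fixed unit vector $\vx$ the quantity $\vx^{\top}\vH\vx$ is a degenerate order-two U-statistic of the positions, whose tails are only stretched-exponential and cannot be union-bounded over an $\exp(cN)$-size net at level $\sqrt N$; and the trace-moment route amounts to controlling the full even spherical-harmonic expansion of the threshold kernel (the degree-two harmonic alone contributes $\Theta(\sqrt N)$, so your target is tight and the high-degree tail needs separate treatment), which is a substantial computation you have not sketched. Because your final bound and even your weaker requirement on $\kappa$ rest entirely on this unproved estimate, the proposal is incomplete as it stands.

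It is also worth noting that the paper never proves, nor needs, a $\sqrt N$-scale operator-norm bound. Instead it bounds the residuals $\|\vA\vu-\lambda_1\vu\|_2$ and $\|\vA\vv-\lambda_2\vv\|_2$ directly through row-wise conditional independence plus Bernstein (giving the $(1+\rho)N\log N$ factor), and controls $\lambda_3(\vA)$ by a decoupling argument combined with matrix Bernstein, obtaining only $\lambda_3(\vA) \lesssim \sqrt{Np\log N + N^2p^2\sqrt{\log N/d}} = O\bigl(N\sqrt[4]{\log N/d}\bigr)$; that weaker bound is exactly why the hypothesis reads $\kappa > c\sqrt[4]{\log N/d}$. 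So you face a fork: either genuinely prove $\|\vH\| = \tilde O(\sqrt N)$ (which would in fact yield a stronger theorem than the one stated, a sign that the step is doing more work than the problem requires), or replace it with a conditioning/decoupling bound of the paper's type, at which point your argument essentially collapses onto the paper's proof. As written, the pivotal spectral estimate is a placeholder, so the proof does not go through.
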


Parameter selection is often critical for the success of algorithms.
The above result suggests that a dense graph ($p = O(1)$) is usually a good choice,
which is quite different with SDP.  

In this regime, the above result indicates that the algorithm works correctly in fairly broad conditions compared with existing analysis for SC. 
A fascinating insight revealed by the above theorem is that 
even when the number of samples $N \ll d$, we can succeed to cluster the data set,
which demonstrates the commonly accepted opinion that $\rho > 1$ is necessary for SC is partially inaccurate.

To clarify the condition on $\kappa$, namely on affinity, 
assume these two subspaces overlap in a smaller subspace of dimension $s$,
but are orthogonal to each other in the remaining directions.
In this case, the affinity between the two subspaces is equal to $\sqrt{s/d}$.
Our assumption on $\kappa$ indicates that subspaces can have intersections of almost all dimensions, i.e., $s = (1 - o(1))d$. 
In contrast, previous works~\citep{soltanolkotabi2012geometric, soltanolkotabi2014robust} imposes that the overlapping dimension should obey $s = o(1)d$, 
so that the subspaces are practically orthogonal to each other.

In the noisy case, we assume each data point is of the form
\begin{equation} \label{eq:noise}
\vy = \vx + \vz,
\end{equation} 
where $\vx$ denotes the clean data used in the above theorem,
and $\vz \sim \mathcal{N}(0, \frac{\sigma^2}{n}\vI)$ is an independent stochastic noise term.
We have the following robustness guarantee for TIP-SC. 

\begin{theorem} \label{thm:noise}
Choosing $\tau = O\left(\frac{1}{\sqrt{d}}\right)$ such that $p = O(1)$, there exists some numerical constant $c, \sigma^* > 0$, 
such that whenever $\kappa > c\sqrt[4]{\frac{\log N}{d}}$ and $\sigma < \sigma^*$, 
the clustering error rate of TIP-SC is less than $O\left(\frac{(1+\sigma^2d/n)^2(1+1/\rho)\log N}{\kappa^2d}\right)$ with probability at least $1 - \me^{-\Omega(\log N)}$.
\end{theorem}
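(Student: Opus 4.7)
The plan is to reduce Theorem~\ref{thm:noise} to Theorem~\ref{thm:result} by treating the Gaussian noise $\vz_i \sim \mathcal{N}(\bm{0}, \frac{\sigma^2}{n}\vI)$ as a controlled perturbation of the clean setup, and then applying the same spectral argument to the perturbed adjacency matrix. Since TIP-SC depends only on the inner products $\langle \tilde{\vy}_i, \tilde{\vy}_j\rangle$ of the normalized noisy data $\tilde{\vy}_i := \vy_i/\|\vy_i\|$, the task reduces to showing (i) that these inner products are close to a rescaling of the clean ones $\langle \vx_i, \vx_j\rangle$ with high probability, and (ii) that the extra randomness from $\vz$ does not destroy the conditional-independence structure used in the proof of Theorem~\ref{thm:result}. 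Point (ii) is essentially automatic: given $(\vx_i, \vz_i)$, the edges $\{A_{ij}\}_{j \neq i}$ are still independent because they depend on the still-independent data $\{\vy_j\}_{j \neq i}$, so the analogous conditional framework goes through with $(\vx_i, \vz_i)$ in place of $\vx_i$.

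For point (i) I would first carry out the Gaussian concentration. Decomposing $\vz_i = \vU\vU^{\top}\vz_i + (\vI-\vU\vU^{\top})\vz_i$ into in-subspace and orthogonal parts yields squared norms that concentrate at $\sigma^2 d/n$ and $\sigma^2(n-d)/n$, so $\|\vy_i\|^2$ concentrates at $1+\sigma^2$. Expanding $\langle \vy_i, \vy_j\rangle = \langle \vx_i, \vx_j\rangle + \langle \vx_i, \vz_j\rangle + \langle \vz_i, \vx_j\rangle + \langle \vz_i, \vz_j\rangle$ and applying standard Gaussian and Hanson--Wright inequalities, each of the three stochastic cross-terms is of size $O((\sigma+\sigma^2)/\sqrt{n})$ with high probability. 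Dividing by $\|\vy_i\|\|\vy_j\|$ then gives $\langle \tilde{\vy}_i, \tilde{\vy}_j\rangle \approx \langle \vx_i, \vx_j\rangle/(1+\sigma^2)$ with additive fluctuations negligible compared with $\tau = O(1/\sqrt{d})$ whenever $n \gg d$ and $\sigma$ is bounded. Thus the noisy thresholding rule is essentially equivalent to the clean one with a rescaled threshold $\tau' = \tau(1+\sigma^2)$, allowing the noiseless probability calculations to be recycled up to this rescaling.

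The core technical step is then to recompute the adjacency probabilities $p, q$ (and their conditional versions $p_i, q_i$) in the noisy regime, which is where the $(1+\sigma^2 d/n)^2$ factor of the theorem originates. For $\vx_j \in S$, after dividing out the orthogonal component of the noise and the common normalization, the relevant in-subspace inner product has the form $\langle \bar{\va}_i + \vU^{\top}\vz_i,\, \bar{\va}_j + \vU^{\top}\vz_j\rangle$. The in-subspace noise $\vU^{\top}\vz_i \sim \mathcal{N}(\bm{0}, \frac{\sigma^2}{n}\vI_d)$ inflates the effective signal variance by $(1+\sigma^2 d/n)$, and the analogous calculation for the cross-subspace case shows that the affinity-driven gap $p-q$ shrinks by the same factor, so the effective spectral gap becomes $\kappa/(1+\sigma^2 d/n)$ (approximately); combining this with the corresponding growth of $\|\vA - \ex\vA\|$ and feeding both into the Davis--Kahan / entrywise spectral argument of Theorem~\ref{thm:result} produces the advertised error rate. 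I expect the main obstacle to be the careful tracking of $\vU^{\top}\vz_i$ through the conditional-moment calculations: unlike the clean case, $\tilde{\vy}_i$ is no longer uniform on the unit sphere of $S$, so the rotational-symmetry arguments available for Theorem~\ref{thm:result} must be replaced by a stability analysis of the perturbed distribution. The hypothesis $\sigma < \sigma^*$ is precisely what is needed to keep these perturbations subdominant to the clean spectral gap and thereby preserve the conclusion.
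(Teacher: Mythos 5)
Your proposal follows essentially the same route as the paper: its proof of Theorem~\ref{thm:noise} likewise observes that the noise preserves the conditional-independence structure, keeps Lemma~\ref{lem:count} and Lemma~\ref{lem:eig} essentially unchanged, and isolates the only substantive modification in the gap estimate $p - q \gtrsim \kappa/(1+\sigma^2 d/n)$, which, once fed into the Theorem~\ref{thm:result} machinery (where the error scales as the inverse square of the gap), produces the $(1+\sigma^2 d/n)^2$ factor. One small caution: your second paragraph's claim that the noise cross-terms are negligible relative to $\tau$ is in tension with your (correct) third paragraph, since those in-subspace noise contributions are exactly what inflate the conditional variance by $1+\sigma^2 d/n$ and shrink the gap; the operative calculation you give is the one that matches the paper.
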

The proof is similar to that of Theorem~\ref{thm:result}, and both are deferred to Section~\ref{sec:proof}.

\section{Numerical Experiments}
\label{sec:exp}
In this section, we perform numerical experiments validating our main results.
We evaluate the algorithm and theoretical results based on the clustering accuracy.
The impacts of $\kappa, \rho, p, q$ on the clustering accuracy are demonstrated.
Besides, we also show the efficiency of TIP-SC in the presence of noise.

According to the definition of semi-random model, 
to save computation and for simplicity, the data are generated by the following steps.
\begin{itemize}
\item[1)] Given $d \ll n$ and ${\rm aff} = \sqrt{s/d}$, 
define $\ve_i \in \mathbb{R}^n$, whose entries are zero but the $i$-th entry is one. 
Let $\vU_1 = [\ve_1, \ve_2, \ldots, \ve_d]$ be the orthonormal basis for subspace for $S_1$,
and $\vU_2 = [\ve_{d-s+1}, \ve_{d-s+2}, \ldots, \ve_{2d-s}]$ be the orthonormal basis for subspace for $S_2$,
such that the affinity between $S_1$ and $S_2$ is $\sqrt{s/d}$.
\item[2)] Given $N = \rho d$, generate $N$ vectors $\va_1, \va_2, \ldots, \va_N \in \mathbb{R}^d$ independently from $\mathcal{N}(0, \frac{1}{d}\vI)$.
Let $\vx_i = \vU_1 \frac{\va_i}{\|\va_i\|}$ for $1 \le i \le N/2$ 
and $\vx_i = \vU_2 \frac{\va_i}{\|\va_i\|}$ for $N/2+1 \le i \le N$.
\item[3)] In the presence of noise, given $\sigma > 0$, generate $N$ random noise terms $\vz_1, \vz_2, \ldots, \vz_N \in \mathbb{R}^n$ independently from $\mathcal{N}(0, \frac{\sigma^2}{n}\vI)$.
Let the normalized data of $\vx_i + \vz_i$ be the input of Algorithm~\ref{alg:TIP-SC}.
\end{itemize}

Since there are too many factors we need to consider, 
we always observe the relation between two concerned quantities,
while keep others being some predefined typical values,
i.e., $d^* = 100, n^* = 5000, \kappa^* = 1 - \sqrt{1/2}\ (s^* = d/2), \rho^* = 1$,
and $\tau$ is chosen to be $\tau^*$ such that the connection rate $\frac{p+q}{2} = 0.2$. 
We conduct the experiments in noiseless situations, 
except the last one which tests the robustness of Algorithm~\ref{alg:TIP-SC}.
Moreover, the curves are plotted by $100$ trials in each experiment,
while the mean and the standard deviation are represented by line and error bar, respectively.
We can find that the randomness is eliminated in all experiments when the error rate is small.

\begin{figure}[t]
\begin{center}
\includegraphics[width=0.6\textwidth]{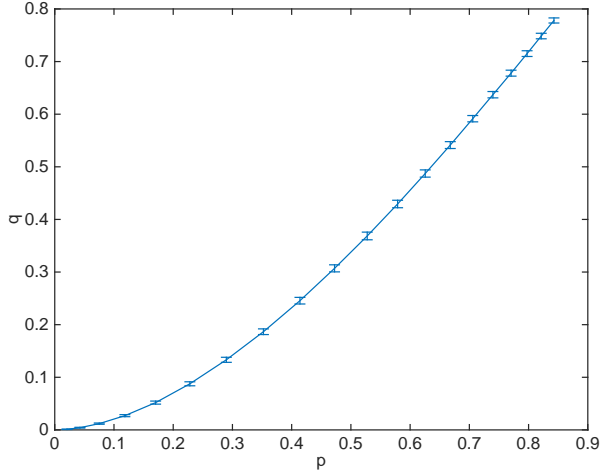}
\caption{The relation between $p$ and $q$, when $d = 100, n = 5000, \kappa = 1 - \sqrt{1/2}\ (s = d/2), \rho = 1$.}\label{fig:p-q}
\end{center}
\end{figure}

\begin{figure}[t]
\begin{center}
\includegraphics[width=0.6\textwidth]{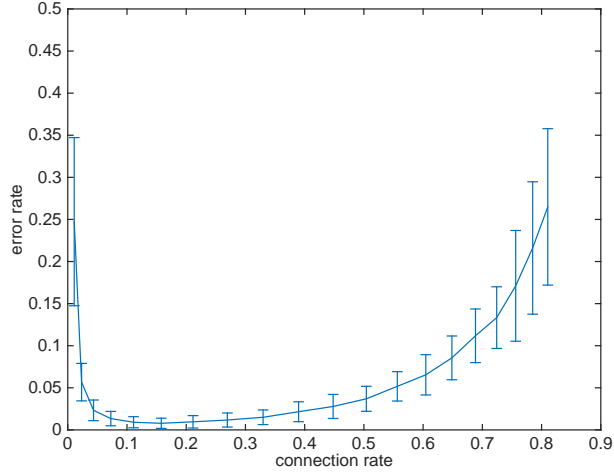}
\caption{This figure demonstrates the clustering error rate versus the connection rate ($\frac{p+q}{2}$) in a general interval, when $d = 100, n = 5000, \kappa = 1 - \sqrt{1/2} (s = d/2), \rho = 1$.}\label{fig:e-p}
\end{center}
\end{figure}

It is obvious that $p$ will decrease simultaneously if $q$ decreases by increasing $\tau$,
which is also demonstrated in Figure~\ref{fig:p-q}.
Combining the result of the second experiment (c.f. Figure~\ref{fig:e-p}),
we can find that it is better to make $p, q$ both large than to choose $q = 0$,
although $q = 0$ is suggested by SDP,
which is consistent with our result,
while shows that SDP is somewhat inadequate for SC.

\begin{figure}[t]
\begin{center}
\includegraphics[width=0.6\textwidth]{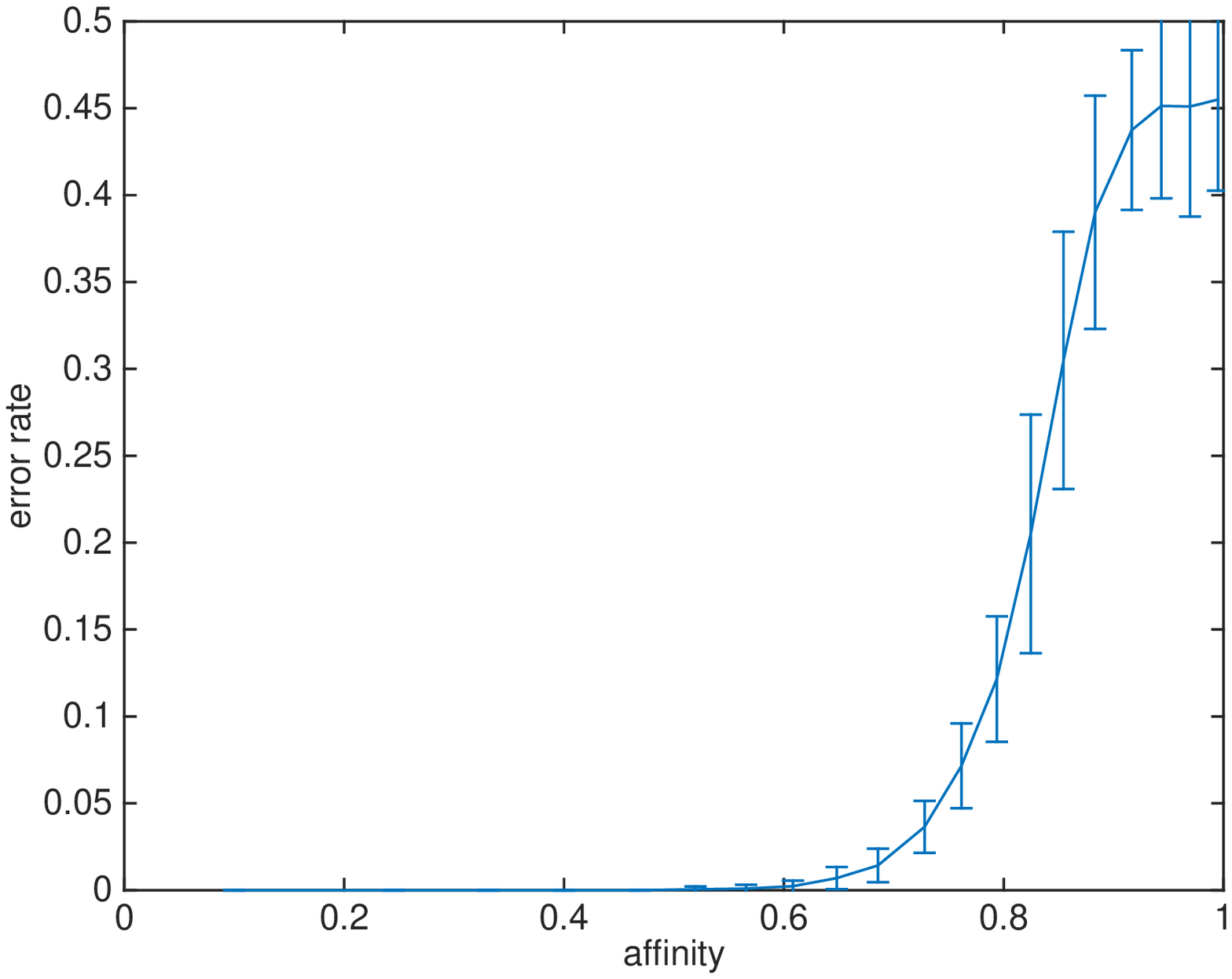}
\caption{This figure demonstrates the clustering error rate versus the affinity in a general interval, when $d = 100, n = 5000, \rho = 1, \frac{p+q}{2} = 0.2$.}\label{fig:e-k}
\end{center}
\end{figure}

\begin{figure}[t]
\begin{center}
\includegraphics[width=0.6\textwidth]{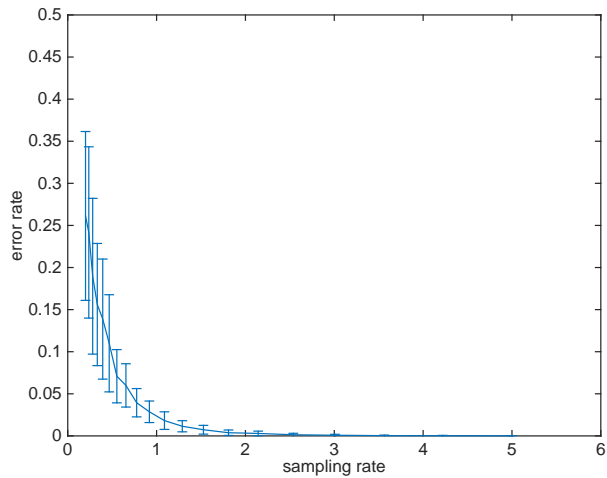}
\caption{This figure demonstrates the clustering error rate versus the sampling rate $\rho$ in a general interval, when $d = 100, n = 5000, \kappa = 1 - \sqrt{1/2}\ (s = d/2), \frac{p+q}{2} = 0.2$.}\label{fig:e-r}
\end{center}
\end{figure}

\begin{figure}[t]
\begin{center}
\includegraphics[width=0.6\textwidth]{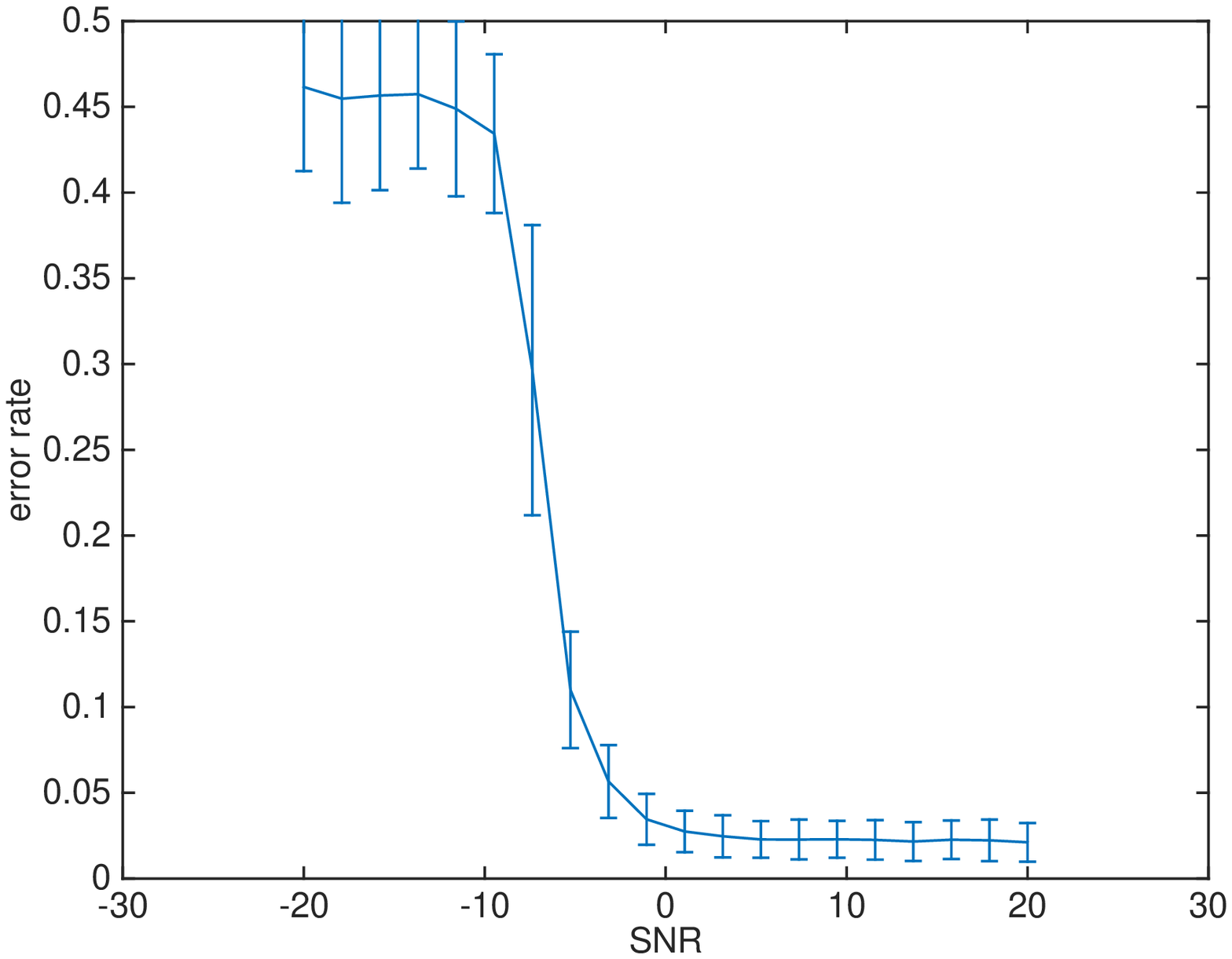}
\caption{This figure demonstrates the clustering error rate versus the SNR in a general interval, when $d = 100, n = 5000, \kappa = 1 - \sqrt{1/2}\ (s = d/2), \rho = 1, \frac{p+q}{2} = 0.2$.}\label{fig:noise}
\end{center}
\end{figure}

In the third and fourth experiments, we inspect the impacts of affinity and sampling rate on the performance of TIP-SC. 
From Figure~\ref{fig:e-k} and Figure~\ref{fig:e-r}, 
the claim that SC works well in fairly broad conditions is verified.
In addition, according to \eqref{eq:noise}, we have 
\begin{equation*}
{\rm SNR} = 10\log \frac{1}{\sigma^2},
\end{equation*}
then the last experiment (c.f. Figure~\ref{fig:noise}) shows that the algorithm is robust even though SNR is low. 

\section{Proof of Main Results}
\label{sec:proof}

\subsection{Proof of Theorem~\ref{thm:result}}

Recall the definition of $\vu, \vv, \vw, \vW$ in Section~\ref{sec:notation},
and notice that analyzing the error rate, denoted by $\gamma$, is equivalent to studying the difference between $\vw$ and $\vv$.
Without loss of generality we may assume that $\langle\vw, \vv\rangle > 0$,
thus the error rate is exactly
$$
\gamma = \frac{1}{4}\left\|\frac{1}{\sqrt{N}}{\rm sgn}(\vw) - \vv\right\|_2^2.
$$
To estimate $\gamma$, it suffices to bound the distance between $\vu, \vv$ and $\vW$.

By simple geometric consideration, we have

\begin{align*}
\left\|\frac{1}{\sqrt{N}}{\rm sgn}(\vw) - \vv\right\|_2 \le& 2\|\vP_\vw\vv - \vv\|_2 \\
\le& 2(\|\vP_\vW\vv - \vv\|_2 + \|\vP_\vw\vv - \vP_\vW\vv\|_2) \\
=& 2(\|\vP_\vW\vv - \vv\|_2 + |\langle\overline{\vP_\vW\vu}, \vv\rangle|) \\
\le& 2(\|\vP_\vW\vv - \vv\|_2 + \|\overline{\vP_\vW\vu} - \vu\|_2) \\
\le& 2\|\vv - \vP_\vW\vv\|_2 + 4\|\vu - \vP_\vW\vu\|_2,
\end{align*}
where $\overline{\vP_\vW\vu}$ denote the normalization of $\vP_\vW\vu$.
Moreover, for any $\lambda, \vx$, we have 
\begin{equation*}
\|\vA\vx - \lambda\vx\|_2 \ge (\lambda - \lambda_3(\vA))\|\vx - \vP_\vW\vx\|_2,
\end{equation*}
where $\lambda_3(\vA)$ denotes the third largest eigenvalue of $\vA$.

Summing up, for $\lambda_1, \lambda_2 > \lambda_3(\vA)$,
$$
\gamma = \frac{1}{4}\left\|\frac{1}{\sqrt{N}}{\rm sgn}(\vw) - \vv\right\|_2^2 \lesssim \frac{\|\vA\vu - \lambda_1\vu\|_2^2}{(\lambda_1 - \lambda_3(\vA))^2} + \frac{\|\vA\vv - \lambda_2\vv\|_2^2}{(\lambda_2 - \lambda_3(\vA))^2},
$$
Considering that $\ex \langle \vA\vu, \vu \rangle = p(N/2-1)+qN/2$, 
we expect $\lambda_1 = p(N/2-1)+qN/2$ is a good choice.
Similarly, choose $\lambda_2 = p(N/2-1)-qN/2$.

From above discussion, to estimate $\gamma$ we need to: 
\begin{itemize}
\item Prove $\|\vA\vu - \lambda_1\vu\|_2$ and $\|\vA\vv - \lambda_2\vv\|_2$ are sufficiently small (see Lemma~\ref{lem:pq} and Lemma~\ref{lem:count}).
\item Prove $\lambda_1 - \lambda_3(\vA)$ and $\lambda_2 - \lambda_3(\vA)$ are sufficiently large,
which is equivalent to showing $p-q$ is large enough (see Lemma~\ref{lem:pq}) and $\lambda_3(\vA)$ is small enough (see Lemma~\ref{lem:eig}).
\end{itemize}

Before proceeding, we analyze the adjacent matrix $\vA$ based on the conditional independence property,
and provide probability estimations used in the proof of Theorem~\ref{thm:result}.
Specifically, this refers to if conditioned on $\vx_i, i \in \mathcal{S}$ for some subset $\mathcal{S}$ of $[N]$,
$A_{ij}$,  for $j \in \mathcal{S}^c$, are functions of $\vx_j$, respectively, 
and then are independent from each other. 

Moreover, recalling the definition of $\vx_i, \va_i$, 
on the collection of events $\mathcal{E}(t)$ given by the intersection of
\begin{align*}
\{\forall i, |\|\va_i\| - 1| < t\} \\ 
\left\{\forall i, \left|\sum_k \lambda_k^2 a_{ik}^2 - \frac{\sum_k \lambda_k^2}{d}\right| < t\right\} \\ 
\{\forall i \ne j, |\langle \vx_i, \vx_j \rangle| < t\},
\end{align*}
if conditioned on $\vx_i, i \in \mathcal{S}$, $A_{ij}$,  for $j \in \mathcal{S}^c$ are \emph{nearly identically distributed},
and for some $j \in \mathcal{S}^c$, $A_{ij}$,  for $i \in \mathcal{S}$ are \emph{nearly independent} from each other,
which will be explained and employed many times in the following analysis. 
According to Lemma~\ref{lem:ineq_norm} and Lemma~\ref{lem:angle}, 
there exist some constants $c_1, c_2 > 0$, such that
$$
\pr\left(\mathcal{E}\bigg(c_1\sqrt{\frac{\log N}{d}}\bigg)\right) > 1 - \me^{-c_2\log N}.
$$
For simplicity, use $\mathcal{E}$ to denote $\mathcal{E}\left(c_1\sqrt{\frac{\log N}{d}}\right)$.
In this work, we will always analyze the spectral method on the canonical event set $\mathcal{E}$.

Let 
$$
\Phi(t) := \int_{|x| > t} \frac{1}{\sqrt{2\pi}}\me^{-\frac{x^2}{2}} {\rm d}x,
$$
then
\begin{lemma} \label{lem:p}
All $p_i$ are equal, and there exist constants $c_1, c_2 > 0$, such that
$$
p = p_i > \Phi(\tau_d(1+t)) - \me^{-c_2 \log N},
$$
where $\tau_d := \sqrt{d}\tau$ and $t = c_1\sqrt{\frac{\log N}{d}}$.
\end{lemma}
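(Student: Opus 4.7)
The plan is first to establish that $p_i$ does not depend on $i$, and then to lower bound the common value $p$ by a clean decoupling of a Gaussian event from a norm-concentration event. For the equality of the $p_i$'s, I would invoke rotational symmetry of the uniform distribution on $\mathcal{S}^{d-1}$: conditional on $\vx_i$, the other point $\vx_j$ is an independent uniform sample on the sphere in $S$, so the distribution of $|\langle\vx_i,\vx_j\rangle|$ is the same for every $\vx_i$, and $p_i = p$ follows at once.

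The heart of the proof is the lower bound on $p$. My starting point is the representation $\vx_j = \vU\overline{\va}_j$ with $\va_j \sim \mathcal{N}(\bm{0},\frac{1}{d}\vI_d)$, which gives $\langle\vx_i,\vx_j\rangle = \langle\overline{\va}_i,\va_j\rangle/\|\va_j\|$. Fixing $\overline{\va}_i$ (a unit vector), the variable $g := \sqrt{d}\,\langle\overline{\va}_i,\va_j\rangle$ is a standard normal, and the event $A_{ij}=1$ reads $|g| \ge \tau_d\|\va_j\|$. The subtlety is that $g$ and $\|\va_j\|$ are not independent: decomposing $\va_j=(g/\sqrt{d})\overline{\va}_i+\vb$ with $\vb\perp\overline{\va}_i$ gives $\|\va_j\|^2=g^2/d+\|\vb\|^2$. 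Rather than conditioning on $\|\va_j\|$ and integrating a chi density, I would exploit the elementary inclusion
\[
\{|g|\ge\tau_d(1+t)\}\cap\{\|\va_j\|\le 1+t\} \;\subset\; \{|g|\ge\tau_d\|\va_j\|\},
\]
combined with the Bonferroni-type bound $\pr(B_1\cap B_2)\ge\pr(B_1)-\pr(B_2^c)$ and $g\sim\mathcal{N}(0,1)$, which immediately yields
\[
p \;\ge\; \Phi\bigl(\tau_d(1+t)\bigr) - \pr\bigl(\|\va_j\|>1+t\bigr).
\]

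To finish, I would apply Lemma~\ref{lem:ineq_norm}: for $t=c_1\sqrt{\log N/d}$ with $c_1$ chosen sufficiently large, Gaussian concentration of the chi-distributed norm $\|\va_j\|$ around $1$ gives $\pr(\|\va_j\|>1+t)\le\me^{-c_2\log N}$ for a corresponding $c_2>0$, delivering the claim. I expect the main obstacle to be precisely the decoupling step; a brute-force conditioning on $\|\va_j\|$ followed by integration against the chi density would also work but is substantially more delicate, whereas the inclusion trick cleanly separates the one-dimensional Gaussian tail from a scalar norm tail and avoids keeping track of the dependence $\|\va_j\|^2 = g^2/d + \|\vb\|^2$ altogether.
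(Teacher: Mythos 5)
Your proposal is correct and follows essentially the same route as the paper: rotational invariance of the Gaussian construction gives $p_i = p$, and the lower bound comes from the same Bonferroni/inclusion decoupling (the paper writes it as $\pr(\langle\overline{\va}_i,\va_j\rangle \ge \tau\|\va_j\|) \ge \pr(\langle\overline{\va}_i,\va_j\rangle \ge \tau(1+t)) - \pr(\|\va_j\| > 1+t)$, which is exactly your inclusion argument, followed by Lemma~\ref{lem:ineq_norm} for the norm tail). The only cosmetic difference is that you work with the two-sided event directly while the paper symmetrizes with a factor of $2$; the content is identical.
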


\begin{proof}
Conditioned on $\vx_i$, for $\vx_j \in S$
\begin{equation*}
p_i := \pr\left(|\langle \vx_i, \vx_j\rangle| \ge \tau \Bl \vx_i\right) = \pr\left(|\langle \overline{\va}_i, \overline{\va}_j\rangle| \ge \tau \Bl \va_i\right),
\end{equation*}
where $\va_i, \va_j\overset{\mathrm{ind.}}{\sim} \mathcal{N}(\bm{0}, \frac{1}{d}\vI_d) \in \mathbb{R}^d$, 
and $\overline{\va}_i, \overline{\va}_j$ denote the normalization with $\vx_i = \vU\overline{\va}_i, \vx_j = \vU\overline{\va}_j$.
According to the independence between $\va_i, \va_j$ and the rotational invariance property of Gaussian random vectors,
it is obviously that all $p_i$ are equal.
Moreover, we have
\begin{align*}
&\pr\left(|\langle \vx_i, \vx_j\rangle| \ge \tau \Bl \vx_i\right) \\
=& 2\pr\left(\langle \overline{\va}_i, \va_j\rangle \ge \tau\|\va_j\| \Bl \va_i\right) \\
\ge& 2\pr\left(\langle \overline{\va}_i, \va_j\rangle \ge \tau(1+t) \Bl \va_i\right) - 2\pr\left(\|\va_j\| > 1+t \right) \\
>& \Phi\left(\sqrt{d}\tau(1+t)\right) - \me^{-c_2 \log N}, 
\end{align*}
since $\langle \overline{\va}_i, \va_j\rangle \sim \mathcal{N}(0, \frac{1}{d})$ is a Gaussian random variable independent with $\va_i$,
and 
$$\pr\left(\|\va_j\| > 1+t \right) < \me^{-c_2 \log N}$$ according to Lemma~\ref{lem:ineq_norm}.
\end{proof}

\begin{lemma} \label{lem:q}
There exist constants $c_1, c_2 > 0$, such that for $t = c_1\sqrt{\frac{\log N}{d}}$, on $\mathcal{E}(t)$, we have 
$$
\Phi\left(\frac{\tau_d(1+t)}{{\rm aff}^2 - t}\right) - \me^{-c_2 \log N} < q_i < \Phi\left(\frac{\tau_d(1-t)}{{\rm aff}^2 + t}\right) + \me^{-c_2 \log N},
$$
where $\tau_d := \sqrt{d}\tau$.
\end{lemma}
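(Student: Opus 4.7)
\emph{Proof plan.} The plan is to mirror Lemma~\ref{lem:p}: diagonalize the cross-subspace interaction via the SVD $\vU_1^\top\vU_2 = \vP\Sigma\vQ^\top$, reduce $\langle\vx_i,\vx_j\rangle$ to a one-dimensional Gaussian conditional on $\vx_i$, and then estimate its tail with $\Phi$ while absorbing the deviation terms into the good event $\mathcal{E}(t)$. The new ingredient compared with Lemma~\ref{lem:p} is that the singular values $\lambda_1,\dots,\lambda_d$ now enter as weights, and the concentration of the weighted sum $\sum_k\lambda_k^2 b_{ik}^2$ is precisely what the second clause of $\mathcal{E}(t)$ is designed to control.

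After rotating coordinates via $\vb_i := \vP^\top\va_i$ and $\vb_j := \vQ^\top\va_j$ (still i.i.d.\ $\mathcal{N}(\mathbf{0},\tfrac{1}{d}\vI_d)$ by rotational invariance), I would write
\[
\langle\vx_i,\vx_j\rangle \;=\; \overline{\vb}_i^\top \Sigma\, \overline{\vb}_j \;=\; \frac{\vc^\top\vb_j}{\|\vb_j\|}, \qquad c_k := \lambda_k\,\overline{b}_{ik}.
\]
Conditioning on $\vx_i$ freezes $\vc$, so $\vc^\top\vb_j\sim\mathcal{N}(0,\|\vc\|^2/d)$ and $\pr(|\vc^\top\vb_j|\ge s\mid\vx_i) = \Phi(s\sqrt{d}/\|\vc\|)$. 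I would then separate the tail of the numerator from the normalization by $\|\vb_j\|$ via
\[
\{|\vc^\top\vb_j|/\|\vb_j\|\ge\tau\}\;\subseteq\;\{|\vc^\top\vb_j|\ge\tau(1-t)\}\cup\{\|\vb_j\|<1-t\},
\]
together with the matching reverse inclusion for the lower bound.

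On $\mathcal{E}(t)$, the norm clause gives $\|\vb_i\|^2\in[(1-t)^2,(1+t)^2]$ and the weighted-sum clause gives $\sum_k\lambda_k^2 b_{ik}^2\in[\mathrm{aff}^2-t,\mathrm{aff}^2+t]$, so $\|\vc\|^2 = \|\vb_i\|^{-2}\sum_k\lambda_k^2 b_{ik}^2$ is pinned into an $(\mathrm{aff}^2\pm O(t))$ window. Lemma~\ref{lem:ineq_norm} then bounds $\pr(\|\vb_j\|\notin[1-t,1+t])\le\me^{-c_2\log N}$, which absorbs the ``bad'' part of the inclusion. Plugging the extreme values of $\|\vc\|$ into the monotone function $\Phi(\tau_d(1\pm t)/\|\vc\|)$ and collecting constants yields the two-sided bracket in the statement (with the constants hidden in $t$ and $c_2$ renamed as needed).

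The main obstacle will be the bookkeeping of the $(1\pm t)$ factors: the numerator picks up a factor from $\|\vb_j\|$, the denominator picks up a factor from $\|\vb_i\|$, and the weighted sum contributes an additive $O(t)$ deviation to $\|\vc\|^2$. Each perturbation must be routed through the correct monotonicity direction of $\Phi$, so that the final ratio matches the desired $\tau_d(1\pm t)/(\mathrm{aff}^2\mp t)$-type expression and no deviation leaks into a loose unconditional tail bound. A minor subtlety is that $\mathcal{E}(t)$ is stated in the original basis, but since $\sum_k\lambda_k^2 b_{ik}^2$ is simply the value of the quadratic form $\vb^\top\Sigma^2\vb$ evaluated in the SVD-aligned basis, rotational invariance of the Gaussian law ensures that the same concentration estimate applies after passage to $\vb_i$.
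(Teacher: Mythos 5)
Your proposal is correct and follows essentially the same route as the paper's proof: the paper simply chooses the bases so that $\vU_2^{\top}\vU_1$ is already diagonal (citing a remark of \citet{li2017restricted}) rather than rotating via an explicit SVD, then likewise conditions on $\vx_i$ to reduce $\langle\vx_i,\vx_j\rangle$ to a one-dimensional Gaussian, splits off the $\|\va_j\|$-normalization with Lemma~\ref{lem:ineq_norm}, and uses the clauses of $\mathcal{E}(t)$ to pin the conditional variance near ${\rm aff}^2$ before invoking the monotonicity of $\Phi$. Your bookkeeping of the $(1\pm t)$ factors and of $\|\vc\|^2=\|\vb_i\|^{-2}\sum_k\lambda_k^2b_{ik}^2$ matches the paper's (slightly more compressed) treatment, so no new ideas are needed.
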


\begin{proof}
According to Remark 5 in~\citet{li2017restricted}, we can choose $\vU_1, \vU_2$ such that
\begin{align*}
\vU_2^{\top}\vU_1 
= \left[ \begin{array}{ccc}
     \lambda_1 & & \\ & \ddots&  \\ & & \lambda_{d} \end{array} \right].
\end{align*}
Without loss of generality, assume that $\vx_i \in S_1, \vx_j \in S_2$, then
\begin{equation*}
\pr\left(|\langle \vx_i, \vx_j\rangle| \ge \tau \Bl \vx_i\right) = \pr\left(|\langle \vU_1\overline{\va}_i, \vU_2\overline{\va}_j\rangle| \ge \tau \Bl \va_i\right),
\end{equation*}
where $\va_i, \va_j\overset{\mathrm{ind.}}{\sim} \mathcal{N}(\bm{0}, \frac{1}{d}\vI_d) \in \mathbb{R}^d$, 
and $\overline{\va}_i, \overline{\va}_j$ denote the normalization with $\vx_i = \vU_1\overline{\va}_i, \vx_j = \vU_2\overline{\va}_j$.
In addition, the definition of $\mathcal{E}(t)$ gives,
\begin{equation*}
\left|\|\vU_2^{\top}\vU_1\overline{\va}_i\|^2 - {\rm aff}^2\right| < t,
\end{equation*}
then according to Lemma~\ref{lem:ineq_norm}
\begin{align*}
q_i =& \pr\left(|\langle \vx_i, \vx_j\rangle| \ge \tau \Bl \vx_i\right) \\
=& \pr\left(|\langle \vU_2^{\top}\vU_1\overline{\va}_i, \overline{\va}_j\rangle| \ge \tau \Bl \va_i\right) \\
\ge& 2\pr\left(\langle  \vU_2^{\top}\vU_1\overline{\va}_i, \va_j\rangle \ge \tau(1+t) \Bl \va\right) - 2\pr\left(\|\va_j\| > 1+t \right) \\
>& \Phi\left(\frac{\sqrt{d}\tau(1+t)}{{\rm aff}^2 - t}\right) - \me^{-c_2 \log N},
\end{align*}
and similarly,
$$
q_i < \Phi\left(\frac{\sqrt{d}\tau(1-t)}{{\rm aff}^2 + t}\right) + \me^{-c_2 \log N}.
$$
\end{proof}

Specifically, according to the above two lemmas about $p_i, q_i$, we can easily get the following lemma.

\begin{lemma} \label{lem:pq}
Choose $\tau_d = O(1)$, then $p = \Omega(1)$.
Moreover, on $\mathcal{E}$, there exists some constant $c > 0$, 
such that if $\kappa = 1 - {\rm aff}^2 > c\sqrt{\frac{\log N}{d}}$,
\begin{equation*}
p - q \gtrsim \kappa,
\end{equation*}
and
\begin{equation*}
\frac{1}{N} \sum_{i} (q_i - q)^2 \lesssim \frac{\log N}{d}.
\end{equation*}
\end{lemma}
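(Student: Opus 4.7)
The plan is to read off all three assertions from the pair of tail estimates already established in Lemma~\ref{lem:p} and Lemma~\ref{lem:q}, combined with a single application of the mean value theorem to the function $\Phi$. Recall $\Phi$ is smooth, monotonically decreasing, and its derivative is $-2\phi$ where $\phi$ denotes the standard Gaussian density.

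First I would dispose of the easy claim $p = \Omega(1)$. Since $\tau_d = O(1)$ by hypothesis and $t = O(\sqrt{\log N/d}) = o(1)$, Lemma~\ref{lem:p} gives $p \ge \Phi(\tau_d(1+t)) - \me^{-c_2\log N}$, and $\Phi$ evaluated at an $O(1)$ point is a positive constant, so $p = \Omega(1)$. For the gap bound $p - q \gtrsim \kappa$, I combine the lower bound for $p$ from Lemma~\ref{lem:p} with the upper bound for $q$ from Lemma~\ref{lem:q} (recall $q$ is the unconditional connection probability for a cross-subspace pair, so it obeys the same upper bound as $q_i$ on $\mathcal{E}$). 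This yields
\[
p - q \;\ge\; \Phi\bigl(\tau_d(1+t)\bigr) - \Phi\!\left(\frac{\tau_d(1-t)}{\mathrm{aff}^2 + t}\right) - 2\me^{-c_2\log N}.
\]
The second argument minus the first is, after a short algebraic simplification, $\tau_d(\kappa - 3t + O(t\kappa + t^2))/(1-\kappa+t)$. Because $\kappa > c\sqrt{\log N/d}$ with $c$ a sufficiently large constant, the leading $\kappa$ dominates the $O(t)$ perturbation. A one-step mean value theorem on $\Phi$ then gives a difference of order $2\phi(\xi)\cdot\tau_d\kappa$ for some $\xi = O(1)$; here $\phi(\xi) = \Omega(1)$ and $\tau_d = \Theta(1)$, and the exponentially small correction is absorbed, delivering $p - q \gtrsim \kappa$.

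For the variance estimate, the key observation is that Lemma~\ref{lem:q} bounds every $q_i$ (and hence $q$) inside the same short interval
\[
\left[\,\Phi\!\left(\frac{\tau_d(1+t)}{\mathrm{aff}^2 - t}\right) - \me^{-c_2\log N},\; \Phi\!\left(\frac{\tau_d(1-t)}{\mathrm{aff}^2 + t}\right) + \me^{-c_2\log N}\,\right].
\]
A direct computation shows the two arguments differ by $O(t)$, so by the same mean value theorem this interval has length $O(t) = O(\sqrt{\log N/d})$. Hence $|q_i - q| = O(\sqrt{\log N/d})$ uniformly in $i$, which squared and averaged yields $\frac{1}{N}\sum_i (q_i - q)^2 \lesssim \log N / d$.

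The only delicate step is the second one: ensuring that the $t$-perturbations appearing inside the arguments of $\Phi$ in Lemmas~\ref{lem:p} and~\ref{lem:q} do not swamp the genuine $\kappa$-sized signal. This is precisely why the hypothesis $\kappa > c\sqrt{\log N/d}$ is imposed with a large constant $c$; it guarantees that $\kappa - O(t)$ retains a constant fraction of $\kappa$, so the MVT lower bound survives. Everything else is a routine manipulation of the tail bounds already in hand.
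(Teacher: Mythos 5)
Your route coincides with the paper's: the paper gives no separate argument for Lemma~\ref{lem:pq}, asserting it follows from Lemmas~\ref{lem:p} and~\ref{lem:q}, and that is exactly what you carry out; your handling of $p=\Omega(1)$, of the exponentially small corrections, and of passing from the conditional $q_i$ to the unconditional $q$ is fine. Two of your intermediate claims, however, break in the small-affinity regime (${\rm aff}^2$ of order $t$ or smaller), which the hypothesis $\kappa > c\sqrt{\log N/d}$ does not exclude. First, in the gap estimate the upper argument $\tau_d(1-t)/({\rm aff}^2+t)$ can be as large as order $\tau_d/t = \Theta(\sqrt{d/\log N})$, so the mean-value point $\xi$ obtained by applying the MVT over the whole interval need not be $O(1)$; indeed, when the interval is that long, $\phi(\xi)$ must be small because $\Phi(a)-\Phi(b)\le 1$. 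The conclusion survives by monotonicity of $\Phi$: replace the upper endpoint $b$ by $\min\{b,\, a+\tau_d\kappa/4\}$ (your own computation shows $b-a\gtrsim \tau_d\kappa$) and apply the MVT on that shorter interval, all of whose points are $O(1)$. Second, in the variance estimate the two arguments in Lemma~\ref{lem:q} differ by $2t\tau_d(1+{\rm aff}^2)/({\rm aff}^4-t^2)$, which is $\Theta(t/{\rm aff}^4)$, not $O(t)$ as you claim; the desired bound $|q_i-q|=O(t)$ is nonetheless true, because $\phi$ evaluated at arguments of size $\gtrsim \tau_d/{\rm aff}^2$ is exponentially small, so $\phi(\xi)\cdot t/{\rm aff}^4 \lesssim t$, and when ${\rm aff}^2 \lesssim t$ one should instead argue directly that every $q_i$, and hence $q$, is superpolynomially small by the upper bound of Lemma~\ref{lem:q} alone (its lower bound being vacuous there). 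With these two local repairs, each a one-line fix that does not change your strategy, the proof is complete and matches the paper's intended derivation.
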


Having finished the calculation about the probability of each entry, 
we now turn to the overall properties of $\vA$.

\begin{lemma} \label{lem:count}
Conditioned on $\vx_i$, for any $t > 0$
$$
\pr\left(\left|\frac{1}{N/2-1}\sum_{j : \vx_j \in S}A_{ij} - p\right| > t\right) < \me^{-\frac{t^2(N/2-1)}{p + \frac{1}{3}t}}, 
$$
and
$$
\pr\left(\left|\frac{1}{N/2}\sum_{j : \vx_j \in S'}A_{ij} - q_i\right| > t\right) < \me^{-\frac{t^2N/2}{q_i + \frac{1}{3}t}}. 
$$
\end{lemma}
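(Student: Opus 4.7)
The plan is to reduce both inequalities to the classical Bernstein (or Chernoff--Bernstein) concentration bound for sums of independent Bernoulli random variables. The work lies almost entirely in verifying the conditional independence structure; after that the bound is immediate.

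First I would pin down the conditional law of the summands. Given $\vx_i$, the indicator $A_{ij} = \mathbf{1}\{|\langle \vx_i, \vx_j\rangle| \ge \tau\}$ is a deterministic function of $\vx_j$ only. Since $\{\vx_j\}_{j \ne i}$ are mutually independent (and independent of $\vx_i$), this makes the family $\{A_{ij}\}_{j \ne i}$ conditionally independent given $\vx_i$. For $\vx_j \in S$, Lemma~\ref{lem:p} gives $\pr(A_{ij} = 1 \mid \vx_i) = p_i = p$, so the $A_{ij}$ in the first sum are i.i.d.\ Bernoulli$(p)$. For $\vx_j \in S'$, by the definition of $q_i$ the $A_{ij}$ in the second sum are i.i.d.\ Bernoulli$(q_i)$.

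Second, I would invoke Bernstein's inequality for each sum. Each summand is bounded by $1$ and has variance at most $p$ (respectively $q_i$), so applying the standard one-sided Bernstein bound to both tails of
\begin{equation*}
\sum_{j : \vx_j \in S}(A_{ij} - p) \qquad \text{and} \qquad \sum_{j : \vx_j \in S'}(A_{ij} - q_i),
\end{equation*}
with $N/2-1$ and $N/2$ summands respectively, produces bounds of the form $\exp\!\left(-\frac{t^2 n}{p + t/3}\right)$ after dividing through by the number of summands. Rescaling $t$ by the normalising factor $1/(N/2-1)$ or $1/(N/2)$ gives exactly the two displayed inequalities.

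The proof is essentially mechanical, so there is no real obstacle; the only thing worth stating explicitly is the conditioning argument that turns a collection of dependent indicators (viewed marginally, since $A_{ij}$ and $A_{ik}$ share the common vertex $\vx_i$) into an independent family. This is precisely an instance of the conditional independence property invoked in the discussion preceding the lemma, and it is the conceptual reason Bernstein applies here even though $\vA$ itself is not an edge-independent random graph.
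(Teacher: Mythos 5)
Your proposal is correct and follows essentially the same route as the paper: condition on $\vx_i$, note that each $A_{ij}$ is then a function of $\vx_j$ alone so the indicators are conditionally independent Bernoulli variables with parameters $p$ (for $\vx_j \in S$) and $q_i$ (for $\vx_j \in S'$), and apply the Bernstein bound (the paper's Lemma~\ref{lem:Bernstein}). Your explicit remark about why conditioning restores independence is exactly the point the paper makes, just stated more carefully.
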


\begin{proof}
Given $\vx_i$, it can be easily checked that the angels between $\vx_j$ and $\vx_i$ are independent with each other,
then $A_{ij}$ are conditionally independent Bernoulli random variables.
Hence, according to Lemma~\ref{lem:Bernstein}, the results is obvious.
\end{proof}

In the next lemma, we will analyze the eigenvalue of $\vA$.

\begin{lemma} \label{lem:eig}
For $t = c_1\sqrt{\frac{\log N}{d}}$, on $\mathcal{E}(t)$, with probability at least $1 - \me^{-c_2\log N}$, 
$$
\lambda_3(\vA) < c\sqrt{Np\log N + N^2p^2t}, 
$$
where $\lambda_3(\vA)$ denotes the third largest eigenvalue of $\vA$,
and $c, c_1, c_2 > 0$ are some constants.
\end{lemma}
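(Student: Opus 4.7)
The plan is to realize $\vA$ as a rank-$2$ matrix plus a small perturbation and invoke Weyl's inequality. Let $\vM$ be the symmetric deterministic matrix with $M_{ij}=p$ when $\vx_i,\vx_j$ lie in the same subspace and $M_{ij}=q$ otherwise; modulo a diagonal correction of order $p$ that will be absorbed into the error, $\vM$ has rank two, so $\lambda_3(\vM)=0$ and Weyl gives $\lambda_3(\vA)\le\|\vA-\vM\|_{\rm op}+O(p)$. Matching the two terms under the square root in the target bound suggests the decomposition
\[
\vA-\vM \;=\; \tilde\vA+\vE,\qquad \tilde A_{ij}:=A_{ij}-\ex[A_{ij}\mid\vx_i],\qquad E_{ij}:=\ex[A_{ij}\mid\vx_i]-M_{ij},
\]
where the stochastic part $\tilde\vA$ is row-by-row conditionally centered and should produce the $Np\log N$ contribution, while the deterministic drift $\vE$ should produce the $N^2p^2t$ contribution.

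The drift $\vE$ is the easy half. By Lemma~\ref{lem:p}, $p_i=p$ exactly, so $E_{ij}=0$ on same-subspace pairs and $E_{ij}=q_i-q$ on cross-subspace pairs. Hence $\vE$ is supported on the two cross-blocks, and each cross-block is a rank-one matrix $(\vq^{(a)}-q\mathbf{1})\mathbf{1}^\top$ with $\vq^{(a)}=(q_i)_{i\in S_a}$. Consequently, on $\mathcal{E}$, using Lemma~\ref{lem:pq},
\[
\|\vE\|_{\rm op} \;\le\; \sqrt{N/2}\,\max_a \|\vq^{(a)}-q\mathbf{1}\|_2 \;\lesssim\; \sqrt{N}\cdot\sqrt{N\cdot\tfrac{\log N}{d}} \;\lesssim\; Nt.
\]
Since Lemma~\ref{lem:pq} also gives $p=\Omega(1)$ and $t=o(1)$, this is at most $Np\sqrt t$, which squared matches $N^2p^2t$.

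The main obstacle is bounding $\|\tilde\vA\|_{\rm op}$, because the rows of $\tilde\vA$ are coupled through the symmetry $A_{ij}=A_{ji}$ and are not jointly independent. My plan is a bipartite decoupling that isolates the conditional independence that does hold. Split $[N]$ into two halves $I_1,I_2$ of equal size, each containing equally many points from $S_1$ and $S_2$, and analyze the four sub-blocks $\tilde\vA_{I_a,I_b}$ separately. Within any off-diagonal block $a\ne b$, once $\{\vx_i:i\in I_a\}$ is fixed the columns of $\tilde\vA_{I_a,I_b}$ become independent functions of the i.i.d.\ points $\vx_j$, $j\in I_b$; they are conditionally centered, uniformly bounded in $[-1,1]$, and have entrywise conditional variance at most $p$. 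On $\mathcal{E}$ the column covariance is furthermore nearly diagonal, since the cross-row interactions are controlled by $|\langle\vx_i,\vx_{i'}\rangle|<t$, so a conditional matrix Bernstein bound yields $\|\tilde\vA_{I_a,I_b}\|_{\rm op}\lesssim\sqrt{Np\log N}$ with conditional probability at least $1-\me^{-\Omega(\log N)}$; the $\log N$ factor is precisely the cost of an $\epsilon$-net union bound on the unit sphere. Diagonal blocks $a=b$ are reduced to the off-diagonal case by a further one-shot split of $I_a$ into two halves. Summing the four sub-block bounds preserves both the scaling and the failure probability.

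Combining the two pieces, $\|\vA-\vM\|_{\rm op}\le\|\tilde\vA\|_{\rm op}+\|\vE\|_{\rm op}\lesssim\sqrt{Np\log N}+Np\sqrt t$, which is $O(\sqrt{Np\log N+N^2p^2t})$ as required. The hardest step is clearly the stochastic one: standard matrix Bernstein cannot be applied directly to $\tilde\vA$ because of the row coupling, and the bipartite decoupling is what makes the conditional independence usable in a clean Bernstein-type argument. An alternative route via the trace method would require carefully tracking closed walks that revisit vertices without revisiting edges, and I expect that bookkeeping to be considerably more tedious than the decoupling approach sketched here.
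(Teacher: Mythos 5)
Your overall architecture is the same as the paper's: reduce $\lambda_3(\vA)$ to the operator norm of a centered matrix, exploit conditional independence across a bipartition of the index set, apply a matrix Bernstein bound, and use the event $\mathcal{E}$ to control the cross-correlations $|\langle\vx_i,\vx_{i'}\rangle|<t$ (the paper uses the random-subset decoupling of Lemma~\ref{lem:dec} together with the min--max principle where you use a fixed half-split and Weyl, but that difference is cosmetic). The genuine gap is in your stochastic step. You claim $\|\tilde\vA_{I_a,I_b}\|_{\rm op}\lesssim\sqrt{Np\log N}$ and attribute the $N^2p^2t$ term of the lemma entirely to the drift $\vE$. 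That bookkeeping is wrong: the drift only contributes $\|\vE\|_{\rm op}^2\lesssim N^2t^2$, which is strictly smaller than $N^2p^2t$, while the $N^2p^2t$ term genuinely originates in the conditionally centered part. Indeed, the conditional column covariances $\vR_j=\ex\big[\Gamma_j\Gamma_j^{\top}\,\big|\,\{\vx_i\}_{i\in I_a}\big]$ have off-diagonal entries of size up to $p^2t$ on $\mathcal{E}$, hence $\lambda_{\max}(\vR_j)\lesssim p+Np^2t$ and matrix variance $\sum_j\lambda_{\max}(\vR_j)\lesssim Np+N^2p^2t$; ``nearly diagonal'' does not make this negligible, since with $p=\Theta(1)$ and $t=\sqrt{\log N/d}$ the term $Np^2t$ dominates $p$ whenever $Nt\gg1$, which is the regime of interest. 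Carrying the variance correctly, a tail bound at confidence $1-\me^{-\Omega(\log N)}$ via a net or the standard matrix Bernstein inequality gives $\sqrt{(Np+N^2p^2t)\log N}+\sqrt{Np}\,\log N$, which is weaker than the lemma by a $\sqrt{\log N}$ factor on the second term. To recover the stated bound you need the extra device the paper uses: control $\ex\|\cdot\|_{\rm op}^2$ around its mean $\sum_j\lambda_{\max}(\vR_j)$ through the mgf bound of Lemma~\ref{lem:Matrix_Bernstein}, so that the variance $N^2p^2t$ enters additively without a logarithmic factor, yielding $\sqrt{Np\log N+N^2p^2t}$.

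Two smaller points. First, a one-shot split of $I_a$ does not reduce the diagonal block to the off-diagonal case: the two diagonal sub-blocks of the split present the same problem at half the size, so you need either a recursion over dyadic scales (the geometric decay of block sizes keeps the sum of bounds of the right order) or the randomized decoupling of Lemma~\ref{lem:dec} as in the paper. Second, your chain $\|\vE\|_{\rm op}\lesssim Nt\le Np\sqrt{t}$ and your bound $L\lesssim Np$ both use $p=\Omega(1)$; this is consistent with the choice of $\tau$ in Theorem~\ref{thm:result} (and the paper is equally implicit about it), but it should be stated as a hypothesis, since the lemma itself is phrased for a general threshold.
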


\begin{proof}
We transfer the estimation of $\lambda_3(\vA)$ to bounding $\lambda_{\max}\left(\vE\right)$ using Lemma~\ref{lem:eig_property}, i.e., 
\begin{align*}
\lambda_3(\vA) =& \min_{S_{N-2}} \max_{\vx \in S_{N-2}} \vx^{\top}\vA\vx \\
\le& \max_{\vx \in {\rm span}(\vu, \vv)^{\perp}} \vx^{\top}\vA\vx \\
=& \max_{\vx \in {\rm span}(\vu, \vv)^{\perp}} \vx^{\top}\vE\vx \\
\le& \max_{\vx : \|\vx\| = 1} \vx^{\top}\vE\vx \\
=& \lambda_{\max}\left(\vE\right),
\end{align*}
where $\vu, \vv$ are defined in Section~\ref{sec:notation}, 
and 
$$\vE = \vA - (p+q)N/2\vu\vu^{\top} - (p-q)N/2\vv\vv^{\top},$$
then $E_{ij} = -p$, if $i = j$, $E_{ij} = A_{ij} - p$, if $\vx_j \in S$ and $E_{ij} = A_{ij} - q$, if $\vx_j \in S'$.

The analysis of $\lambda_{\max}\left(\vE\right)$ is based on the decoupling technique.
According to Lemma~\ref{lem:dec}, 
let $\mathcal{S}$ be a random subset of $[N]$ with average size $N/2$, then
\begin{align*}
\lambda_{\max}\left(\vE\right) &= \sup_{\vx : \|\vx\| = 1} \vx^{\top}\vE\vx \\
&= \sup_{\vx : \|\vx\| = 1} \left(\sum_i x_i^2E_{ii} + \sum_{i \ne j} x_ix_jE_{ij}\right) \\
&= -p + \sup_{\vx : \|\vx\| = 1} 4\ex_{\mathcal{S}} \sum_{i \in \mathcal{S}, j \in \mathcal{S}^c} x_ix_jE_{ij} \\
&\le -p + 4\ex_{\mathcal{S}} \sup_{\vx : \|\vx\| = 1} \sum_{i \in \mathcal{S}, j \in \mathcal{S}^c} x_ix_jE_{ij} \\
&\le -p + 4\ex_{\mathcal{S}} \|\vE_{\mathcal{S}, \mathcal{S}^c}\|_{\mathrm{op}},
\end{align*}
where $\vE_{\mathcal{S}, \mathcal{S}^c}$ denotes the sub-matrix of $\vE$ including the rows from $\mathcal{S}$ and columns from $\mathcal{S}^c$,
and $\|\cdot\|_{\mathrm{op}}$ denotes the operator norm.

To analyze $\|\vE_{\mathcal{S}, \mathcal{S}^c}\|_{\mathrm{op}}$, we first condition on  $\mathcal{S}$ and $\vx_j, j \in \mathcal{S}^c$, 
and for $i \in \mathcal{S}$, let $\Gamma_i := \vE_{i, \mathcal{S}^c}, \vR_i := \ex \Gamma_i^{\top}\Gamma_i$, and $L := \max_i\|\Gamma_i\|^2$, 
then $\Gamma_i$ are independent with each other.
On $\mathcal{E}$, 
\begin{align*}
L =& \max_i\|\Gamma_i\|^2 \\
=& \max_i \sum_{j \in \mathcal{S}^c} E_{ij}^2 \\
\lesssim& Np.
\end{align*}
Moreover, for the diagonal entries of $\vR_i$, 
\begin{align*}
\vx_j \in S : \ex \Gamma_{ij}\Gamma_{ij} =& \ex (A_{ij} - p)^2 = p(1-p) \le p, \\
\vx_j \in S' : \ex \Gamma_{ij}\Gamma_{ij} =& \ex (A_{ij} - q)^2 = q(1-q)+(q_j-q)^2 \le p.
\end{align*}
On the other hand, for the off-diagonal entries of $\vR_i$, 
if $\vx_j, \vx_k \in S$,
\begin{align*}
\left|\ex \Gamma_{ij}\Gamma_{ik}\right| = \left|\ex (A_{ij} - p)(A_{ik} - p)\right| \lesssim p^2t,
\end{align*}
since $\langle \vx_j, \vx_k \rangle \le t$.
With similar analysis on the cases $\vx_j \in S', \vx_k \in S$ and $\vx_j, \vx_k \in S'$,
we have the off-diagonal entries of $\vR_i$ are less than $p^2t$.
Hence,
\begin{equation*}
\lambda_{\max}\left(\vR_i\right) \lesssim p + Np^2t =: \lambda.
\end{equation*}
and Lemma~\ref{lem:Matrix_Bernstein} gives, for $0 < \theta < 3/L$,
\begin{align*}
\log \ex \exp\bigg(\theta \sum_i\left(\Gamma_i^{\top}\Gamma_i - \vR_i\right)\bigg) &= \sum_i\log \ex \exp\bigg(\theta\left(\Gamma_i^{\top}\Gamma_i - \vR_i\right)\bigg) \\
&\preccurlyeq \sum_i\frac{\theta^2/2}{1-\theta L/3} \ex \left(\Gamma_i^{\top}\Gamma_i - \vR_i\right)^2 \\
&\preccurlyeq \sum_i\frac{\theta^2/2}{1-\theta L/3} L\vR_i.
\end{align*}
Then
\begin{align*}
&\pr\left(\ex_{\mathcal{S}} \bigg(\|\vE_{\mathcal{S}, \mathcal{S}^c}\|_{\mathrm{op}}^2 - \sum_i\lambda_{\max}\left(\vR_i\right)\bigg) > t\right) \\
<& \inf_{\theta} \me^{-\theta t}\ex \exp\left(\ex_{\mathcal{S}} \theta \bigg(\|\vE_{\mathcal{S}, \mathcal{S}^c}\|_{\mathrm{op}}^2 - \sum_i\lambda_{\max}\left(\vR_i\right)\bigg)\right) \\
\le& \inf_{\theta} \me^{-\theta t}\ex\ex_{\mathcal{S}} \exp\left(\theta \bigg(\|\vE_{\mathcal{S}, \mathcal{S}^c}\|_{\mathrm{op}}^2 - \sum_i\lambda_{\max}\left(\vR_i\right)\bigg)\right) \\
\le& \inf_{\theta} \me^{-\theta t}\ex \mathrm{Tr}\left(\exp\bigg(\theta \sum_i\left(\Gamma_i^{\top}\Gamma_i - \vR_i\right)\bigg)\right) \\
\le& \inf_{\theta} \me^{-\theta t}\ex \mathrm{Tr}\left(\exp\bigg(\sum_i \frac{\theta^2/2}{1-\theta L/3}L \vR_i\bigg)\right) \\
\le& \inf_{\theta} \me^{-\theta t} N\exp\left(\sum_i \frac{\theta^2/2}{1-\theta L/3}L \lambda_{\max}\left(\vR_i\right)\right) \\
\lesssim& N\exp\left(\frac{-t^2/2}{NL \lambda + Lt/3}\right).
\end{align*}
Hence, with probability at least $1 - \me^{-c_2\log N}$,
\begin{align*}
\ex_{\mathcal{S}} \|\vE_{\mathcal{S}, \mathcal{S}^c}\|_{\mathrm{op}} &\le \sqrt{\ex_{\mathcal{S}} \|\vE_{\mathcal{S}, \mathcal{S}^c}\|_{\mathrm{op}}^2} \\
&\le c\sqrt{N\lambda + L\log N} \\
&= c\sqrt{Np\log N + N^2p^2t}.
\end{align*}
Summing up,
\begin{equation*}
\lambda_3(\vA) \le \lambda_{\max}\left(\vE\right) < c\sqrt{Np\log N + N^2p^2t}.
\end{equation*}
We conclude the proof.
\end{proof}

Now, we have all the ingredients for the proof of Theorem~\ref{thm:result}.

\begin{proof}[Proof of Theorem~\ref{thm:result}]
We begin with some inequalities for estimating the error.
We have
\begin{align*}
&\|\vA\vu - (p(N/2-1)+qN/2)\vu\|_2^2 \\
=& \frac{1}{N}\sum_i \bigg(\sum_{j : \vx_j \in S}A_{ij} - p(N/2-1) + \sum_{j : \vx_j \in S'}A_{ij} - qN/2\bigg)^2 \\
\le& \frac{3}{N}\sum_i \bigg(\sum_{j : \vx_j \in S}A_{ij} - p(N/2-1)\bigg)^2 \\ 
&+ \frac{3}{N}\sum_i \bigg(\sum_{j : \vx_j \in S'}A_{ij} - q_iN/2\bigg)^2 + \frac{3}{N}\sum_i (q_iN/2 - qN/2)^2.
\end{align*}
According to Lemma~\ref{lem:count}, for all $1 \le i \le N$, we have, with probability at least $1 - \exp(-\Omega(\log N))$,  
\begin{equation*}
\bigg(\sum_{j : \vx_j \in S}A_{ij} - p(N/2-1)\bigg)^2 \lesssim N\log N,
\end{equation*}
and
\begin{equation*}
\bigg(\sum_{j : \vx_j \in S'}A_{ij} - q_iN/2\bigg)^2 \lesssim N\log N.
\end{equation*}
On the other hand, Lemma~\ref{lem:pq} gives, with probability at least $1 - \exp(-\Omega(\log N))$,
\begin{equation*}
\frac{3}{N}\sum_i (q_iN/2 - qN/2)^2 \lesssim \rho N\log N.
\end{equation*}
Summing up, we have, with probability at least $1 - \exp(-\Omega(\log N))$,
$$
\|\vA\vu - (p(N/2-1)+qN/2)\vu\|_2^2 \lesssim (1+\rho)N\log N.
$$
Similarly, 
with probability at least $1 - \exp(-\Omega(\log N))$,
$$
\|\vA\vv - (p(N/2-1)-qN/2)\vv\|_2^2 \lesssim (1+\rho)N\log N.
$$

According to Lemma~\ref{lem:eig}, for $t = O\left(\sqrt{\frac{\log N}{d}}\right)$, with probability at least $1 - \exp(-\Omega(\log N))$, the third largest eigenvalue of $\vA$ satisfies
$$
\lambda_3(\vA) \lesssim \sqrt{Np\log N + N^2p^2t} = O\left(N\sqrt[4]{\frac{\log N}{d}}\right).
$$

With these estimations at hand,
recall
\begin{align*}
\gamma \lesssim& \frac{\|\vA\vu - (p(N/2-1)+qN/2)\vu\|_2^2}{|p(N/2-1)+qN/2 - \lambda_3(\vA)|^2} + \frac{\|\vA\vv - (p(N/2-1)-qN/2)\vv\|_2^2}{|p(N/2-1)-qN/2 - \lambda_3(\vA)|^2}.
\end{align*}
Lemma~\ref{lem:pq} gives $p \pm q \gtrsim 1 - {\rm aff}^2$,
then we have 
\begin{equation*}
\gamma \lesssim \left(\frac{\sqrt{(1+\rho)N\log N}}{N\Big(1 - {\rm aff}^2 - \sqrt[4]{\frac{\log N}{d}}\Big)}\right)^2 \sim \frac{(1+\rho)\log N}{\kappa^2N}.
\end{equation*}
We conclude the proof.
\end{proof}

\subsection{Proof of Theorem~\ref{thm:noise}}

Robustness analysis can be completed by following the similar analysis method.
We provide the differences in the analysis of noise, while omit the details.

Here, we only need to pay attention to the changes of Lemma~\ref{lem:pq}, Lemma~\ref{lem:count}, and Lemma~\ref{lem:eig}, when adding noise.
Notice that the noise terms do not destroy the wonderful conditional independence property,
then it's obvious that except the estimation for $p-q$, all other bounds still hold in a similar way.
Through simple calculation, the contribution of noise has the form
$$
p - q \gtrsim \frac{\kappa}{1+\sigma^2d/n}.
$$
Taking this change into account, we can get the result easily.

\section{Conclusion}

This paper establish a theory to analyze spectral method for Random Geometry Graph constructed by data points from Union of Subspaces.
Based on this theory, we demonstrate the efficiency of Subspace Clustering in fairly broad conditions.
To the best of our knowledge, the clustering accuracy has not been shown in the prior literature.
The insights and analysis techniques developed in this paper might also have implications for other Random Geometry Graph.

Moving forward, one issue is to understand UoS-RGG constructed by more complex strategy, such as SSC.
Additionally, ideally one would desire an exact recovery by spectral method,
which needs entrywise analysis.
We leave these to future investigation.

\appendix
\section{Auxiliary Lemmas}

In this subsection, we introduce some well-known results about Gaussian, Bernoulli random variables, and matrices \citep{vershynin2010introduction}, 
which shall be used to analyze the properties of the adjacent matrix $\vA$.
We omit the proof for most of them.

\begin{lemma}[Concentration in Gauss space \citep{ledoux2001concentration}] \label{lem:ineq_Lip} 
Let $f$ be a real valued Lipschitz function on $\mathbb{R}^n$ with Lipschitz constant $K$, i.e.,
$$
\left| f(\vx_1)-f(\vx_2) \right| \le K\left\| \vx_1-\vx_2 \right\|
$$ 
for any $\vx_1, \vx_2 \in \mathbb{R}^n$ (such functions are also called K-Lipschitz). 
Let $X \sim \mathcal{N}(\bm{0}, \vI_n)$ be the standard Gaussian random vector in $\mathbb{R}^n$, 
then for every $t > 0$, one has
$$
\mathbb{P} \left( f(\vx)-\mathbb{E}f(\vx) > t \right) < e^{-\frac{t^2}{2K^2}}.
$$
\end{lemma}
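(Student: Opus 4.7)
The plan is to prove this via the Herbst argument based on the Gaussian log-Sobolev inequality, which is the standard route for this classical concentration bound. By a routine mollification and translation I may assume $f$ is smooth with $|\nabla f| \le K$ almost everywhere and $\mathbb{E} f(\vx) = 0$, so the task reduces to showing $\mathbb{P}(f(\vx) > t) < e^{-t^2/(2K^2)}$.

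The key input is the Gaussian log-Sobolev inequality: for any smooth $g : \mathbb{R}^n \to \mathbb{R}$ with $\vx \sim \mathcal{N}(\bm{0}, \vI_n)$,
\[
\mathbb{E}[g(\vx)^2 \log g(\vx)^2] - \mathbb{E}[g(\vx)^2]\log \mathbb{E}[g(\vx)^2] \le 2\,\mathbb{E}\|\nabla g(\vx)\|^2.
\]
I would apply this with $g = e^{\lambda f/2}$ for $\lambda > 0$. Since $\|\nabla g\|^2 = (\lambda^2/4)\|\nabla f\|^2 e^{\lambda f} \le (\lambda^2 K^2/4) e^{\lambda f}$, defining $F(\lambda) := \mathbb{E}[e^{\lambda f(\vx)}]$ the inequality becomes
\[
\lambda F'(\lambda) - F(\lambda)\log F(\lambda) \le \tfrac{\lambda^2 K^2}{2} F(\lambda).
\]
Setting $H(\lambda) := \lambda^{-1}\log F(\lambda)$, this simplifies to $H'(\lambda) \le K^2/2$. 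Because $H(\lambda) \to \mathbb{E} f(\vx) = 0$ as $\lambda \downarrow 0$ (by L'Hôpital), integration yields $H(\lambda) \le K^2\lambda/2$, i.e., the subgaussian moment bound $\mathbb{E}[e^{\lambda f(\vx)}] \le e^{K^2 \lambda^2 / 2}$.

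The final step is a standard Chernoff bound: $\mathbb{P}(f(\vx) > t) \le e^{-\lambda t}\mathbb{E}[e^{\lambda f(\vx)}] \le e^{-\lambda t + K^2\lambda^2/2}$, which upon optimizing at $\lambda = t/K^2$ gives exactly $e^{-t^2/(2K^2)}$.

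The main obstacle is not any of the manipulations above, which are essentially one-line calculus, but rather establishing the Gaussian log-Sobolev inequality in the first place. That inequality is itself a deep result, typically proved either by a semigroup/interpolation argument along the Ornstein--Uhlenbeck flow (differentiating the entropy and using hypercontractivity of the associated Mehler kernel), by tensorization from the one-dimensional case combined with a central limit argument from Bernoulli log-Sobolev, or indirectly via the Gaussian isoperimetric inequality of Borell and Sudakov--Tsirelson. For the purposes of this appendix I would simply cite it from Ledoux's monograph, noting that once the log-Sobolev inequality is in hand the Herbst argument above closes the lemma in a few lines; no auxiliary machinery beyond smoothness of $f$ is needed, and the one-sided form in the statement falls out directly without requiring a union bound over $\pm f$.
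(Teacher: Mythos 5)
Your derivation is correct, and it is worth noting that the paper itself offers no proof of this lemma at all: it is stated in the appendix as a classical fact and cited directly to Ledoux's monograph, with the proof omitted. Your Herbst argument is the standard modern route to exactly this statement: the log-Sobolev inequality applied to $g = e^{\lambda f/2}$ gives the differential inequality $\lambda F'(\lambda) - F(\lambda)\log F(\lambda) \le \tfrac{\lambda^2K^2}{2}F(\lambda)$, the substitution $H(\lambda)=\lambda^{-1}\log F(\lambda)$ with $H(0^+)=\ex f=0$ yields the subgaussian bound $\ex e^{\lambda(f-\ex f)}\le e^{K^2\lambda^2/2}$, and Chernoff with $\lambda = t/K^2$ closes the bound; all of these steps check out, including the smoothing reduction (mollify so that $\|\nabla f\|\le K$ pointwise, then pass to the limit). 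The only caveats are cosmetic: you inherit the Gaussian log-Sobolev inequality as a black box, which is no worse than what the paper does in citing the concentration result itself, and the strict inequality ``$<$'' in the statement versus the ``$\le$'' produced by Chernoff is immaterial for every use of the lemma in the paper (and in fact the Chernoff bound is not attained for Lipschitz $f$). An alternative, equally standard route would be via the Gaussian isoperimetric inequality, which gives the sharper bound with the Gaussian tail $\Phi(t/K)$ and with the median in place of the mean, but your log-Sobolev argument fully suffices for the form stated here.
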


\begin{lemma} \label{lem:ineq_norm}
Assume $\va \sim \mathcal{N}(\bm{0}, \frac{1}{d}\vI_d) \in \mathbb{R}^d$, then for any $t > 0$
\begin{equation} \label{eq:norm}
\pr\left(|\|\va\| - 1| > t + \frac{2}{\sqrt{d}} \right) < 2\me^{-\frac{dt^2}{2}}.
\end{equation}
Moreover, for $0 \le \lambda_1, \ldots, \lambda_d \le 1$ and $t > 0$
\begin{equation} \label{eq:square}
\pr\left(\left|\sum_i \lambda_i^2 a_i^2 - \frac{\sum_i \lambda_i^2}{d}\right| > 2t\sqrt{\frac{\sum_i \lambda_i^2}{d}} + t^2 + \frac{4}{d}\right) < 2\me^{-\frac{dt^2}{2}}.
\end{equation}
\end{lemma}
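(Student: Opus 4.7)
}
The natural tool is Lemma~\ref{lem:ineq_Lip} (Gaussian Lipschitz concentration), so the first step is to rescale to a standard Gaussian: write $\vb := \sqrt{d}\,\va \sim \mathcal{N}(\bm{0}, \vI_d)$, and note that both quantities to be controlled are smooth functions of $\vb$.

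For \eqref{eq:norm}, I would apply Lemma~\ref{lem:ineq_Lip} to the map $f(\vb) = \|\vb\|/\sqrt{d}$, which is $1/\sqrt{d}$-Lipschitz and satisfies $f(\vb) = \|\va\|$. This gives
\[
\pr\bigl(\bigl|\|\va\| - \ex\|\va\|\bigr| > t\bigr) < 2\me^{-dt^2/2}.
\]
The remaining task is to show $|\ex\|\va\| - 1| \le 1/\sqrt{d}$, so that the slack $2/\sqrt{d}$ in the stated inequality comfortably absorbs the bias by a triangle inequality. Since $\ex\|\va\|^2 = 1$, Jensen gives $\ex\|\va\| \le 1$, while Gaussian Poincar\'e applied to the $1/\sqrt{d}$-Lipschitz function $f$ yields $\mathrm{Var}(\|\va\|) \le 1/d$, hence $(\ex\|\va\|)^2 \ge 1 - 1/d$ and therefore $(1 - \ex\|\va\|)^2 \le 1 - (\ex\|\va\|)^2 \le 1/d$. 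Combining yields the claim.

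For \eqref{eq:square}, set $Y := \sum_i \lambda_i^2 a_i^2 = \|\vLambda\va\|^2$ with $\vLambda = \mathrm{diag}(\lambda_1,\ldots,\lambda_d)$, and let $\sigma := \sqrt{\sum_i \lambda_i^2/d}$, so $\ex Y = \sigma^2$. The same pipeline applies to the $(\max_i\lambda_i)/\sqrt{d} \le 1/\sqrt{d}$-Lipschitz function $\vb \mapsto \|\vLambda\vb\|/\sqrt{d} = \sqrt{Y}$, giving
\[
\pr\bigl(\bigl|\sqrt{Y} - \ex\sqrt{Y}\bigr| > t\bigr) < 2\me^{-dt^2/2},
\]
and the Poincar\'e argument supplies $\bigl|\ex\sqrt{Y} - \sigma\bigr| \le 1/\sqrt{d}$ (via $(\sigma-\ex\sqrt{Y})^2 \le \sigma^2 - (\ex\sqrt{Y})^2 = \mathrm{Var}(\sqrt{Y}) \le 1/d$). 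Translating by the bias,
\[
\pr\bigl(\bigl|\sqrt{Y} - \sigma\bigr| > t + 1/\sqrt{d}\bigr) < 2\me^{-dt^2/2}.
\]
Finally I would transfer the tail from $\sqrt{Y}$ to $Y$ using the factorization $Y - \sigma^2 = (\sqrt{Y}-\sigma)(\sqrt{Y}+\sigma)$: on the good event, $\sqrt{Y}+\sigma \le 2\sigma + (t + 1/\sqrt{d})$, so
\[
|Y - \sigma^2| \le (t + 1/\sqrt{d})\bigl(2\sigma + t + 1/\sqrt{d}\bigr) = 2\sigma t + t^2 + \bigl[2\sigma/\sqrt{d} + 2t/\sqrt{d} + 1/d\bigr].
\]

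The main obstacle is the bracketed residual: it must be absorbed into the $4/d$ term of the statement. Here I would use $\sigma \le 1$ (since $\lambda_i \le 1$ forces $\sum\lambda_i^2/d \le 1$) together with the AM--GM bounds $2t/\sqrt{d} \le t^2 + 1/d$ and $2\sigma/\sqrt{d} \le $ (a small constant)$/d$ in the regime of interest, or more simply reinterpret $t$ modulo a constant factor; the resulting inequality matches the stated form up to a harmless constant in front of $t^2$ and $1/d$. This algebraic tightening, rather than any probabilistic input, is the only delicate step.
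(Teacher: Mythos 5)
Your argument for \eqref{eq:norm} is correct and is essentially the paper's argument (Lipschitz concentration of the norm plus a bias bound obtained from the variance; the paper gets the variance bound $4/d$ by integrating the tail rather than by Gaussian Poincar\'e, and centers everything around $\ex\|\va\|$, but the idea is the same and your $1/\sqrt{d}$ bias even improves the constant). The genuine problem is the last step of your treatment of \eqref{eq:square}. Writing $Y=\sum_i\lambda_i^2a_i^2$ and $\sigma^2=\sum_i\lambda_i^2/d$, your route first converts the variance bound into $|\ex\sqrt{Y}-\sigma|\le 1/\sqrt{d}$, centers the concentration inequality at $\sigma$, and then squares; this produces the residual $2\sigma/\sqrt{d}+2t/\sqrt{d}+1/d$. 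The term $2\sigma/\sqrt{d}$ cannot be absorbed: $\sigma$ can be of order one (e.g.\ all $\lambda_i=1$), so this term is of order $1/\sqrt{d}$, far larger than $4/d$, and since it does not involve $t$ it is not removed by ``reinterpreting $t$ up to a constant'' either; your suggested bound $2\sigma/\sqrt{d}\lesssim 1/d$ is simply false in general. Hence, as written, you only prove a strictly weaker inequality than \eqref{eq:square} (it would suffice for the paper's application, where $t\gtrsim\sqrt{\log N/d}$, but not for the lemma as stated, which is claimed for all $t>0$).

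The fix is small and is exactly what the paper does: keep the bias information at the level of squares rather than of square roots. You already have $\sigma^2-(\ex\sqrt{Y})^2=\mathrm{Var}(\sqrt{Y})\le 1/d$ (the paper gets $4/d$ by tail integration) and $(\ex\sqrt{Y})^2\le\sigma^2$. Now square the concentration event $|\sqrt{Y}-\ex\sqrt{Y}|\le t$ directly: on it, $Y\le(\ex\sqrt{Y}+t)^2\le\sigma^2+2t\sigma+t^2$, and $Y\ge(\ex\sqrt{Y}-t)^2\ge(\ex\sqrt{Y})^2-2t\ex\sqrt{Y}\ge\sigma^2-1/d-2t\sigma$ when $\ex\sqrt{Y}\ge t$ (and the case $\ex\sqrt{Y}<t$ gives $\sigma^2-Y\le t^2+1/d$ trivially). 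This yields $|Y-\sigma^2|\le 2t\sigma+t^2+4/d$ with no $\sigma/\sqrt{d}$ cross term, i.e.\ \eqref{eq:square} exactly. In short: passing from $\mathrm{Var}(\sqrt{Y})\le 1/d$ to $|\ex\sqrt{Y}-\sigma|\le1/\sqrt{d}$ and back to squares is where a factor $\sigma+\ex\sqrt{Y}=O(1)$ is needlessly paid; avoid that detour and your proof closes.
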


\begin{proof}
Let
\begin{equation*}
f(\vx) = \sqrt{\sum_i \lambda_i^2 x_i^2},
\end{equation*}
then by calculation
\begin{equation*}
\|\nabla f(\vx)\| = \sqrt{\frac{\sum_i \lambda_i^4 x_i^2}{\sum_i \lambda_i^2 x_i^2}} \le 1.
\end{equation*}
Hence, $f(\vx)$ is $1-Lipschitz$ and according to Lemma~\ref{lem:ineq_Lip}, we have
\begin{equation*}
\pr\left(\sqrt{\sum_i \lambda_i^2 a_i^2} - \ex \sqrt{\sum_i \lambda_i^2 a_i^2} > t \right) < \me^{-\frac{dt^2}{2}}.
\end{equation*}
Take $f(x) = -\sqrt{\sum_i \lambda_i^2 x_i^2}$, then similarly
\begin{equation*}
\pr\left(\sqrt{\sum_i \lambda_i^2 a_i^2} - \ex \sqrt{\sum_i \lambda_i^2 a_i^2} < -t \right) < \me^{-\frac{dt^2}{2}}.
\end{equation*}
Moreover, $\left(\ex \sqrt{\sum_i \lambda_i^2 a_i^2}\right)^2 \le \ex \sum_i \lambda_i^2 a_i^2 = \frac{\sum_i \lambda_i^2}{d}$ and
\begin{align*}
&\bigg(\ex \sqrt{\sum_i \lambda_i^2 a_i^2}\bigg)^2 
\\=& \ex \sum_i \lambda_i^2 a_i^2 - {\rm Var}\bigg(\sqrt{\sum_i \lambda_i^2 a_i^2}\bigg) \\
=& \frac{\sum_i \lambda_i^2}{d} - \int_t t^2 {\rm d}\pr\left(\Bigg|\sqrt{\sum_i \lambda_i^2 a_i^2} - \ex \sqrt{\sum_i \lambda_i^2 a_i^2}\Bigg| < t \right) \\
=& \frac{\sum_i \lambda_i^2}{d} - \int_t 2t \pr\left(\Bigg|\sqrt{\sum_i \lambda_i^2 a_i^2} - \ex \sqrt{\sum_i \lambda_i^2 a_i^2}\Bigg| > t \right) {\rm d}t \\
\ge& \frac{\sum_i \lambda_i^2 - 4}{d}.
\end{align*}
Taking $\lambda_i = 1$, we prove \eqref{eq:norm}.
Taking square, we prove \eqref{eq:square}.
\end{proof}

Here, we also use $\langle \va, \vb \rangle$ to denote the angle between $\va$ and $\vb$.

\begin{lemma}[Concentration of measure \citep{ledoux2001concentration}] \label{lem:angle}
Assume $\va, \vb \overset{\mathrm{ind.}}{\sim} \mathcal{N}(\bm{0}, \frac{1}{d}\vI_d) \in \mathbb{R}^d$, then for any $t > 0$
$$
\pr\left(|\cos\langle \va, \vb \rangle| > t \right) < 2\me^{-\frac{dt^2}{2}}.
$$
\end{lemma}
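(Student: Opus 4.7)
The plan is to reduce the tail bound on $|\cos\langle\va,\vb\rangle|$ to concentration of a single coordinate of a uniformly random point on the unit sphere in $\mathbb{R}^d$, and then invoke Gaussian Lipschitz concentration (Lemma~\ref{lem:ineq_Lip}) in the style already used for Lemma~\ref{lem:ineq_norm}.

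First, I would exploit rotational invariance to eliminate one of the two vectors. Since $\va$ and $\vb$ are independent and rotationally invariant, and since $\cos\langle\va,\vb\rangle$ is scale-invariant in each argument, I condition on $\vb$ and rotate coordinates so that $\vb/\|\vb\|=\ve_1$. This yields
\begin{equation*}
\cos\langle\va,\vb\rangle \;\stackrel{d}{=}\; \frac{a_1}{\|\va\|}, \qquad \va\sim \mathcal{N}(\bm 0,\tfrac{1}{d}\vI_d).
\end{equation*}
Since this quantity depends only on $\va/\|\va\|$, which is uniformly distributed on $\mathcal{S}^{d-1}$, it has the same distribution as the first coordinate $Y_1$ of a uniformly random point on the unit sphere.

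Second, to bound $\pr(|Y_1|>t)$, I would apply Lemma~\ref{lem:ineq_Lip} to a suitable Lipschitz function of $\va$. By symmetry, $\pr(|a_1|/\|\va\|>t)=2\,\pr(a_1>t\|\va\|)=2\,\pr(f(\va)>0)$, where $f(\va):=a_1-t\|\va\|$. A direct computation gives $\|\nabla f\|^2=1-2t\,(a_1/\|\va\|)+t^2\le (1+t)^2$, so $f$ is $(1+t)$-Lipschitz as a function of $\va$, and hence $(1+t)/\sqrt d$-Lipschitz as a function of $\sqrt d\,\va\sim\mathcal{N}(\bm 0,\vI_d)$. The mean is controlled by $\ex f=-t\,\ex\|\va\|$ together with $\ex\|\va\|\ge\sqrt{1-4/d}$, which is exactly the bound derived in the proof of Lemma~\ref{lem:ineq_norm} with $\lambda_i\equiv 1$. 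Lemma~\ref{lem:ineq_Lip} then yields $\pr(f>0)\le\exp(-d(\ex f)^2/(2(1+t)^2))\le \me^{-\Omega(dt^2)}$.

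The only mild obstacle is recovering the exact constant $d/2$ in the exponent rather than $d/(2(1+t)^2)$. This can be resolved either by invoking classical L\'evy concentration on the sphere for the 1-Lipschitz function $Y\mapsto Y_1$, whose median is $0$ by symmetry, which directly delivers $\pr(|Y_1|>t)\le 2\me^{-(d-1)t^2/2}$; or by localizing to the high-probability event $\{\|\va\|\ge 1-O(1/\sqrt d)\}$, on which the factor $(1+t)^2$ can be replaced by one arbitrarily close to $1$ and the complementary event is absorbed into an $\me^{-\Omega(d)}$ correction. Either way, the single conceptual step is the rotational-invariance reduction; the remaining bound is a standard one-dimensional concentration calculation.
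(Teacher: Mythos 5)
The paper offers no proof of this lemma: it is quoted from Ledoux's concentration-of-measure monograph and the appendix explicitly omits the arguments for these auxiliary facts, so there is nothing in-paper to compare against. Your argument is the standard one and is essentially sound. The rotational-invariance reduction to $a_1/\|\va\|$, i.e.\ to the first coordinate of a uniform point on $\mathcal{S}^{d-1}$, is correct, and applying Lemma~\ref{lem:ineq_Lip} to $f(\va)=a_1-t\|\va\|$ with $\|\nabla f\|\le 1+t$ and $\ex f=-t\,\ex\|\va\|\le -t\sqrt{1-4/d}$ yields $\pr\left(|\cos\langle\va,\vb\rangle|>t\right)\le 2\exp\left(-\tfrac{dt^2(1-4/d)}{2(1+t)^2}\right)$, which is of the right form. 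Two caveats. First, your ``localization'' fix is not quite right as stated: the factor $(1+t)^2$ is the global Lipschitz constant of $f$, and restricting to the event $\{\|\va\|\ge 1-\delta\}$ does not shrink it (Gaussian Lipschitz concentration needs the global constant). What localization actually buys is that you can bypass $f$ entirely, via $\pr(a_1>t\|\va\|)\le\pr\left(a_1>t(1-\delta)\right)+\pr\left(\|\va\|<1-\delta\right)\le \tfrac12 \me^{-dt^2(1-\delta)^2/2}+\me^{-\Omega(d\delta^2)}$, using only the one-dimensional Gaussian tail. Second, neither this nor the L\'evy route (which gives $(d-1)/2$ or $(d-2)/2$ in the exponent) literally recovers the stated constant $d/2$; for that one should cite the exact spherical-cap estimate $\pr(Y_1\ge t)\le \me^{-dt^2/2}$ obtained from the density proportional to $(1-y^2)^{(d-3)/2}$. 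This is immaterial to the paper, which only invokes the lemma with $t=c_1\sqrt{\log N/d}$ and unspecified constants $c_1,c_2$, so any bound of the form $2\me^{-c\,dt^2}$ suffices.
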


\begin{lemma} \label{lem:Bernstein}
$X_1, X_2, \ldots, X_N$ are generated independently from ${\rm Bern}(p)$, 
then for any $t > 0$
$$
\pr\left(\left|\frac{1}{N}\sum_{i}X_i - p\right| > t\right) < \me^{-\frac{t^2N}{p + \frac{1}{3}t}}. 
$$
\end{lemma}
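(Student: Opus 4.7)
The plan is a standard Cram\'er--Chernoff argument tailored to centered, bounded summands. Let $Y_i := X_i - p$, so the $Y_i$ are i.i.d., mean zero, satisfy $|Y_i| \le 1$, and have variance $\ex Y_i^2 = p(1-p) \le p$. I will bound the upper tail $\pr(\sum_i Y_i > Nt)$ by the exponential Markov inequality, then recover the lower tail by applying the same bound to $-Y_i$ and union-bounding.

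The key step is to control the moment-generating function $\ex \me^{\theta Y_i}$ for $\theta \in (0,3)$. Since $|Y_i|^k \le |Y_i|^2 \le Y_i^2$ for every $k \ge 2$, a term-by-term Taylor expansion gives
\begin{equation*}
\ex \me^{\theta Y_i} \;=\; 1 + \sum_{k \ge 2} \frac{\theta^k}{k!}\ex Y_i^k \;\le\; 1 + p\sum_{k \ge 2} \frac{\theta^k}{k!}.
\end{equation*}
Using $k! \ge 2 \cdot 3^{k-2}$ to sum the tail as a geometric series yields the textbook Bernstein bound
\begin{equation*}
\ex \me^{\theta Y_i} \;\le\; 1 + \frac{p\theta^2/2}{1-\theta/3} \;\le\; \exp\left(\frac{p\theta^2/2}{1-\theta/3}\right).
\end{equation*}

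By independence of the $Y_i$ and Markov's inequality applied to $\me^{\theta \sum_i Y_i}$, for every $\theta \in (0,3)$,
\begin{equation*}
\pr\left(\tfrac{1}{N}\sum_i X_i - p > t\right) \;\le\; \exp\left(-\theta N t + \frac{N p\theta^2/2}{1-\theta/3}\right).
\end{equation*}
Optimizing, I would pick $\theta = t/(p + t/3)$, which lies in $(0,3)$; a short algebraic computation collapses the two terms and yields $-Nt^2/\bigl(2(p + t/3)\bigr)$ in the exponent. Repeating the argument for $-Y_i$ and taking a union bound over the two tails gives the stated two-sided inequality (with the factor of $2$ from the union bound absorbed into the exponent as written in the lemma).

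The main obstacle, such as it is, is purely bookkeeping: verifying the series manipulation $\sum_{k\ge 2}\theta^k/k! \le (\theta^2/2)/(1-\theta/3)$ and then performing the $\theta$-optimization cleanly. There is no conceptual difficulty, and no probabilistic input beyond independence, boundedness, and the variance estimate $\mathrm{Var}(X_i) \le p$.
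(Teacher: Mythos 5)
Your argument is the standard Chernoff--Cram\'er proof of Bernstein's inequality, which is exactly what the paper does (its entire proof is ``According to Bernstein's Inequality, the conclusion is obvious''), so in approach you match the paper while supplying the details it omits. The MGF bound, the choice $\theta = t/(p+t/3)$, and the resulting one-sided exponent $-Nt^2/\bigl(2(p+t/3)\bigr)$ are all correct.

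The gap is in your last sentence. What your derivation actually yields for the two-sided tail is $2\exp\bigl(-\tfrac{Nt^2}{2(p+t/3)}\bigr)$, whereas the lemma asserts $\exp\bigl(-\tfrac{Nt^2}{p+t/3}\bigr)$ --- an exponent twice as large in magnitude and with no prefactor. Writing $x = \tfrac{Nt^2}{p+t/3}$, you are claiming $2\me^{-x/2} \le \me^{-x}$, i.e.\ $2 \le \me^{-x/2}$, which is false for every $x>0$; a multiplicative factor of $2$ cannot be ``absorbed'' by doubling the exponent. In fact the inequality exactly as printed is not provable at all: for $N=1$, $p=1/2$, $t=0.4$ the left-hand side equals $1$ while the claimed bound is $\exp(-0.16/(0.5+0.4/3)) \approx 0.78$. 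So the defect lies in the lemma's statement (the paper is being cavalier about constants, and only ever uses this bound in a regime where the probability is $\me^{-\Omega(\log N)}$, for which your $2\exp\bigl(-\tfrac{Nt^2}{2(p+t/3)}\bigr)$ is entirely sufficient), but your proof should state the bound it actually proves rather than assert the printed one.
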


\begin{proof}
According to Bernstein's Inequality, the conclusion is obvious.
\end{proof}

\begin{lemma} \label{lem:eig_property}
For any symmetric matrix $\vM \in \mathbb{R}^{n \times n}$,
$$
\lambda_{i+1} = \min_{S_{n-i}} \max_{\vx \in S_{n-i}} \vx^{\top}\vM\vx,
$$
where $S_{n-i}$ denotes the subspace of $\mathbb{R}^n$ of dimension $n-i$.
\end{lemma}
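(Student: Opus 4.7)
The plan is to establish the Courant–Fischer min-max characterization by combining a single well-chosen test subspace (for the upper bound) with a dimension-counting intersection argument (for the lower bound). Throughout, I would interpret $\max_{\vx \in S_{n-i}} \vx^{\top}\vM\vx$ as the maximum over unit vectors of $S_{n-i}$, which is the only reading under which the quadratic form is bounded.

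First I would invoke the spectral theorem for the real symmetric matrix $\vM$ to fix a decomposition $\vM = \sum_{k=1}^{n} \lambda_k \vv_k \vv_k^{\top}$, where $\lambda_1 \ge \lambda_2 \ge \cdots \ge \lambda_n$ and $\{\vv_k\}_{k=1}^n$ is an orthonormal basis of eigenvectors. All subsequent arguments reduce to expanding a unit vector in this basis and reading off the Rayleigh quotient as a convex combination of the $\lambda_k$'s.

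For the upper bound, I would exhibit the particular subspace $S^{\star}_{n-i} := \mathrm{span}(\vv_{i+1}, \ldots, \vv_n)$, which has dimension $n-i$. Any unit $\vx \in S^{\star}_{n-i}$ admits an expansion $\vx = \sum_{k=i+1}^{n} c_k \vv_k$ with $\sum_{k} c_k^2 = 1$, whence $\vx^{\top}\vM\vx = \sum_{k=i+1}^{n} \lambda_k c_k^2 \le \lambda_{i+1}$, with equality at $\vx = \vv_{i+1}$. This shows the outer minimum is at most $\lambda_{i+1}$. For the matching lower bound, let $T := \mathrm{span}(\vv_1, \ldots, \vv_{i+1})$, a subspace of dimension $i+1$. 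Given any subspace $S_{n-i}$ of dimension $n-i$, the dimensions satisfy $\dim(S_{n-i}) + \dim(T) = (n-i)+(i+1) = n+1 > n$, so $S_{n-i} \cap T$ contains a unit vector $\vx$. Writing $\vx = \sum_{k=1}^{i+1} c_k \vv_k$ with $\sum_k c_k^2 = 1$ gives $\vx^{\top}\vM\vx = \sum_{k=1}^{i+1} \lambda_k c_k^2 \ge \lambda_{i+1}$, and therefore $\max_{\vy \in S_{n-i}} \vy^{\top}\vM\vy \ge \lambda_{i+1}$ for every such $S_{n-i}$, completing the lower bound.

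There is no genuine obstacle here: the statement is a textbook consequence of the spectral theorem, and both directions are routine once the spectral decomposition is in place. The only mildly delicate point is the dimension-counting step guaranteeing a non-trivial intersection $S_{n-i} \cap T$, which I would justify by the standard identity $\dim(S_{n-i} \cap T) \ge \dim(S_{n-i}) + \dim(T) - n$; beyond that, the proof is a direct calculation with Rayleigh quotients.
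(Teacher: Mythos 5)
Your proof is correct: it is the standard Courant--Fischer argument, with the test subspace $\mathrm{span}(\vv_{i+1},\ldots,\vv_n)$ giving the upper bound and the dimension-counting intersection with $\mathrm{span}(\vv_1,\ldots,\vv_{i+1})$ giving the lower bound. The paper itself offers no proof---it simply remarks that this is a basic property of eigenvalues---so there is no alternative route to compare against; you have supplied exactly the argument the paper implicitly relies on. Your observation that the maximum must be taken over \emph{unit} vectors (and likewise that the minimum should range over all subspaces of dimension $n-i$) is a worthwhile correction to the lemma as stated, since without normalization the quadratic form is unbounded on any nontrivial subspace.
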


\begin{proof}
This is a basic property of eigenvalues.
\end{proof}

We define a random subset $\mathcal{S}$ of $[N]$ with average size $\alpha N$ as follows.
For all $i \in [N]$, $i$ belongs to $\mathcal{S}$ with probability $\alpha$ independently from each other.
Then we state an elementary decoupling lemma for double arrays here.

\begin{lemma}[Decoupling \citep{helmers2000decoupling}] \label{lem:dec}
Consider a double array of real numbers $(a_{ij})_{i, j = 1}^{2N}$ such that $a_{ii} = 0$ for all $i$. Then
$$
\sum_{i, j \in [N]} a_{ij} = 4\ex \sum_{i \in \mathcal{S}, j \in \mathcal{S}^c} a_{ij},
$$
where $\mathcal{S}$ is a random subset of $[N]$ with average size $N/2$.
\end{lemma}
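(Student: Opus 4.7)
The statement is an exact identity (not an inequality), and the proof should reduce to a one-line expectation computation using linearity together with the hypothesis $a_{ii}=0$. My plan is as follows.

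First, I would rewrite the right-hand side using indicator functions:
\begin{equation*}
\ex \sum_{i \in \mathcal{S}, j \in \mathcal{S}^c} a_{ij} = \ex \sum_{i,j \in [N]} a_{ij}\, \mathbf{1}_{\{i \in \mathcal{S}\}} \mathbf{1}_{\{j \in \mathcal{S}^c\}} = \sum_{i,j \in [N]} a_{ij}\, \pr\bigl(i \in \mathcal{S},\ j \in \mathcal{S}^c\bigr),
\end{equation*}
where the exchange of sum and expectation is justified because the sum is finite.

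Second, I would compute the probability weights. By the construction of $\mathcal{S}$, each index belongs to $\mathcal{S}$ independently with probability $1/2$. For $i \ne j$, the events $\{i \in \mathcal{S}\}$ and $\{j \in \mathcal{S}^c\}$ are therefore independent, giving $\pr(i \in \mathcal{S},\ j \in \mathcal{S}^c) = 1/4$. For $i = j$, the two events are disjoint, so this probability is $0$, but this contributes nothing to the sum since $a_{ii} = 0$ by hypothesis. Substituting back,
\begin{equation*}
\ex \sum_{i \in \mathcal{S}, j \in \mathcal{S}^c} a_{ij} = \sum_{i \ne j} a_{ij} \cdot \tfrac{1}{4} = \tfrac{1}{4} \sum_{i,j \in [N]} a_{ij},
\end{equation*}
where the last equality again uses $a_{ii} = 0$. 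Multiplying by $4$ yields the claimed identity.

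There is no substantive obstacle here; the only point that demands care is bookkeeping the diagonal terms and remembering that the pairs $(i,j)$ and $(j,i)$ are counted separately in both sides (so the factor $1/4$ is the correct one and not, e.g., $1/2$). The hypothesis $a_{ii}=0$ is what makes the identity clean: without it there would be a residual diagonal term $\sum_i a_{ii}$ unaccounted for on the right. Since the lemma is invoked in Lemma~\ref{lem:eig} with $a_{ij} = x_i x_j E_{ij}$, the diagonal contribution $-p \sum_i x_i^2 = -p$ is precisely the term that is pulled out before the decoupling is applied, consistent with the $a_{ii}=0$ hypothesis.
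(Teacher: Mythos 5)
Your proof is correct, and it is the standard argument: expand the expectation via indicators, use the independence of membership events for distinct indices to get the weight $1/4$, and invoke $a_{ii}=0$ to dispose of the diagonal. The paper itself offers no proof of this lemma (it is cited to the literature and covered by the appendix's blanket ``we omit the proof for most of them''), so there is nothing to diverge from; your one-line computation is exactly what the omitted proof would be. Your closing observation about how the hypothesis $a_{ii}=0$ matches the way the lemma is invoked in Lemma~\ref{lem:eig} --- where the diagonal contribution $\sum_i x_i^2 E_{ii} = -p$ is separated out before decoupling is applied to $a_{ij}=x_ix_jE_{ij}$ --- is accurate and worth keeping. One cosmetic point: the lemma statement indexes the array by $i,j=1,\dots,2N$ while summing over $[N]$; this is evidently a typo in the statement, and your proof correctly treats the sum over $[N]$.
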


\begin{lemma} [Matrix Bernstein: Mgf and Cgf Bound, Lemma 6.6.2 \citep{tropp2015introduction}] \label{lem:Matrix_Bernstein}
Suppose that $\vX$ is a random Hermitian matrix that satisfies
$$
\ex \vX = 0, \lambda_{\max}(\vX) \le L,
$$
then for $0 < \theta < 3/L$
$$
\log \ex \me^{\theta \vX} \preccurlyeq \frac{\theta^2/2}{1-\theta L/3} \ex \vX^2.
$$
\end{lemma}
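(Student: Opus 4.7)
The plan is to follow the classical Bernstein mgf argument for scalar random variables and lift it to Hermitian matrices via the transfer principle for functional calculus. The key trick is to isolate all nonlinear dependence on $\vX$ in a scalar function of the spectrum, bound that scalar function deterministically, and only then take expectations.

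First, I would introduce the scalar auxiliary function
$$\phi(x) \;:=\; \frac{\me^{\theta x} - 1 - \theta x}{x^2} \qquad (x \ne 0), \qquad \phi(0) := \frac{\theta^2}{2},$$
which is continuous on $\mathbb{R}$ by the Taylor expansion. Writing $\phi(x) = \int_0^1 \theta^2 (1-s)\me^{s\theta x}\,ds$ exhibits $\phi$ as the integral of functions each increasing in $x$, hence $\phi$ is monotonically increasing on $\mathbb{R}$. A term-by-term comparison of $\me^{\theta L}-1-\theta L = \sum_{k\ge 2}(\theta L)^k/k!$ with the geometric majorant $\sum_{k\ge 2}(\theta L)^k/(2\cdot 3^{k-2})$, valid since $k! \ge 2\cdot 3^{k-2}$ for $k\ge 2$, yields the scalar bound
$$\phi(L) \;\le\; \frac{\theta^2/2}{1 - \theta L/3} \qquad \text{for } 0 < \theta < 3/L.$$

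Second, I would lift to operators. The identity $\me^{\theta x} = 1 + \theta x + x^2 \phi(x)$ becomes, via the Hermitian functional calculus, $\me^{\theta \vX} = \vI + \theta \vX + \vX^2 \phi(\vX)$. Since $\phi$ is increasing and the spectrum of $\vX$ lies in $(-\infty, L]$, the transfer principle gives $\phi(\vX) \preccurlyeq \phi(L)\,\vI$. Because $\vX^2$ is positive semidefinite and commutes with $\phi(\vX)$ (both are functions of the same Hermitian matrix), I may multiply to get $\vX^2 \phi(\vX) \preccurlyeq \phi(L)\,\vX^2$, and hence
$$\me^{\theta \vX} \;\preccurlyeq\; \vI + \theta \vX + \frac{\theta^2/2}{1-\theta L/3}\,\vX^2.$$
Taking expectations and using $\ex \vX = 0$ gives $\ex \me^{\theta \vX} \preccurlyeq \vI + \frac{\theta^2/2}{1-\theta L/3}\,\ex \vX^2$.

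Third, I would pass from the mgf bound to the cgf bound by applying the logarithm. The logarithm is operator monotone on the positive-definite cone (Löwner's theorem), so the inequality is preserved; combining this with the operator version of $\log(1+t) \le t$, i.e. $\log(\vI + \vA) \preccurlyeq \vA$ for $\vA \succcurlyeq 0$ (which itself follows from the scalar bound and transfer), yields
$$\log \ex \me^{\theta \vX} \;\preccurlyeq\; \log\!\Bigl(\vI + \tfrac{\theta^2/2}{1-\theta L/3}\,\ex \vX^2\Bigr) \;\preccurlyeq\; \frac{\theta^2/2}{1-\theta L/3}\,\ex \vX^2,$$
which is the claim.

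The main obstacle is the operator multiplication step $\vX^2 \phi(\vX) \preccurlyeq \phi(L)\,\vX^2$: operator inequalities are not preserved under multiplication by a positive operator in general, and the step works here only because $\vX^2$ and $\phi(\vX)$ share an eigenbasis. A secondary subtlety is the final application of operator monotonicity of $\log$, which is nontrivial but a standard black box from matrix analysis. Everything else is a routine matrix-calculus lift of the scalar Bernstein calculation.
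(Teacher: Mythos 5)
Your proof is correct, and it is essentially the argument the paper points to: the paper omits a proof and cites Lemma 6.6.2 of \citet{tropp2015introduction}, whose proof is exactly your route --- the increasing scalar function $\phi$, the transfer-principle bound $\vX^2\phi(\vX) \preccurlyeq \phi(L)\vX^2$, the geometric-series estimate $\phi(L) \le \frac{\theta^2/2}{1-\theta L/3}$, and the passage to the cgf via operator monotonicity of $\log$ together with $\log(\vI+\vA) \preccurlyeq \vA$. No discrepancies to report.
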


\vskip 0.2in
\bibliography{mybibfile}

\end{document}